\newif\ifcomments 
\commentstrue
\documentclass{article}

\usepackage{amsmath,amssymb,pifont}
\usepackage{multicol}
\usepackage{amstext}
\usepackage{amsthm}
\usepackage{multirow}
\usepackage{booktabs}
\usepackage[skip=0pt]{subcaption}
\usepackage{times}
\usepackage{lipsum}
\usepackage[shortlabels]{enumitem}
\usepackage{cancel}
\usepackage{wrapfig}
\usepackage{array}
\usepackage{siunitx}
\usepackage{csvsimple}
\usepackage[multidot]{grffile}
\usepackage{bbm}
\usepackage{dblfloatfix}
\usepackage{hyperref}
\usepackage{makecell}
\usepackage{bbm, dsfont}
\usepackage{mathtools}
\usepackage{xcolor}
\usepackage{comment}

\usepackage[multiple]{footmisc}
\usepackage{mathrsfs}
\usepackage{tikz}
\usepackage{hyperref}
\usepackage{cleveref}

% Something here is causing an error.
%\usepackage{algcompatible}

% Brendan: I don't think we should be importing both algorithm2e and algorithmic, which
% do we want?
%\usepackage[ruled,vlined]{algorithm2e}
\usepackage{algpseudocode,algorithm,algorithmicx}
\usepackage{xfrac}

% -- algorithmic --
% \renewcommand{\algorithmicrequire}{\quad\enspace\;\textbf{Input:}}
% \renewcommand{\algorithmicensure}{\quad\enspace\;\textbf{Effects:}}
% \renewcommand{\epsilon}{\varepsilon}
% \newcommand{\mypar}[1]{\smallskip
% 	\noindent{\textbf{{#1}:}}}
	
% \newcommand{\SUB}[1]{\hspace{-0.15in} \textbf{#1}}
% \newcommand{\mycaptionof}[2]{\captionof{#1}{#2}}

% -- author notes --

\newcommand{\eanoteinline}[1]{\todo[color=red!30,inline]{EA: #1}}

\hypersetup{final}

% -- symbol macros --

% -- math macros --

\newcommand{\boldzero}{\ensuremath{\boldsymbol{0}}}

\newcommand{\bfH}{\ensuremath{\mathbf{H}}}

\newcommand{\bfX}{\ensuremath{\mathbf{X}}}

\newcommand{\bfb}{\ensuremath{\mathbf{b}}}

\newcommand{\bfg}{\ensuremath{\mathbf{g}}}

\newcommand{\bfv}{\ensuremath{\mathbf{v}}}

\newcommand{\bfx}{\ensuremath{\mathbf{x}}}
\newcommand{\bfy}{\ensuremath{\mathbf{y}}}

\newcommand{\calC}{\ensuremath{\mathcal{C}}}
\newcommand{\calD}{\ensuremath{\mathcal{D}}}

\newcommand{\calL}{\ensuremath{\mathcal{L}}}

\newcommand{\calN}{\ensuremath{\mathcal{N}}}

% -- text in math mode --

 % ordinal numbers

\newcommand{\R}{\mathbb{R}}

% --theorem --
\newtheorem{lem}{Lemma}[section]

\newtheorem{thm}[lem]{Theorem}

\newtheorem{assumption}[lem]{Assumption}

% -- for record fragmentation (from shuangsong) --

\DeclareMathOperator*{\argmin}{arg\,min}

\makeatletter
\newcommand{\vast}{\bBigg@{4}}
\newcommand{\Vast}{\bBigg@{5}}
\makeatother

\newcommand{\ex}[2]{{\ifx&#1& \mathbb{E} \else
\underset{#1}{\mathbb{E}} \fi \left[#2\right]}}
\newcommand{\pr}[2]{{\ifx&#1& \mathbb{P} \else
\underset{#1}{\mathbb{P}} \fi \left[#2\right]}}

\DeclarePairedDelimiterX{\ip}[2]{\langle}{\rangle}{#1, #2}

\newcommand{\privT}{\theta^{\sf priv}\,}

\newcommand{\ltwo}[1]{\left\|#1\right\|_2}
\newcommand{\ltwosq}[1]{\left\|#1\right\|_2^2}

\newcommand{\norm}[1]{\| #1 \|}
\newcommand{\grad}{\nabla}

\DeclarePairedDelimiterX{\infdivx}[2]{(}{)}{%
  #1\;\delimsize\|\;#2%
}

\newcommand*\samethanks[1][\value{footnote}]{\footnotemark[#1]}

% -- references and labels (eq. fig., ...) --

\newcommand{\mypar}[1]{\smallskip
	\noindent{\textbf{{#1}:}}}
	
\renewcommand{\epsilon}{\varepsilon}

\renewcommand{\tilde}{\widetilde}

\newcommand{\Ex}[1]{\mathbb{E}\left[#1\right]}

\newcommand{\dist}{\tau}
\newcommand{\npriv}{n_{\sf priv}}
\newcommand{\npub}{n_{\sf pub}}
\newcommand{\hPsi}{\widehat{\Psi}}
\newcommand{\dpriv}{D_{\sf priv}}
\newcommand{\dpub}{D_{\sf pub}}
\newcommand{\ind}{\mathbb{I}_{p}}

\newcommand{\alphaT}{K}
\newcommand{\clip}[2]{{\sf clip}\left(#1,#2\right)}
\newcommand{\barH}{\bar{H}}

\usepackage{fullpage}
\usepackage[numbers, sort, comma, square]{natbib}

\title{Public Data-Assisted Mirror Descent for Private Model Training}
\author{
Ehsan Amid\thanks{Google.  \texttt{\{eamid, mathews, swaroopram, shuangsong, steinke,  omthkkr, athakurta\}@google.com}}
\and
Arun Ganesh\thanks{UC Berkeley. Part of this work done while the author was an intern at Google. Supported in part by NSF CCF-1816861. \texttt{arunganesh@berkeley.edu}}
\and
Rajiv Mathews\samethanks[1]
\and
Swaroop Ramaswamy\samethanks[1]
\and
Shuang Song\samethanks[1]
\and
Thomas Steinke\samethanks[1]
\and
Vinith M. Suriyakumar\thanks{MIT. Part of this work was done while the author was an intern at Google. \texttt{vinithms@mit.edu}}
\and
Om Thakkar\samethanks[1]
\and
Abhradeep Thakurta\samethanks[1]

}

\begin{document}

\maketitle
\begin{abstract}
In this paper, we revisit the problem of using \emph{in-distribution} public data to improve the privacy/utility trade-offs for differentially private (DP) model training. (Here, public data refers to auxiliary data sets that have no privacy concerns.) We design a natural variant of DP mirror descent, where the DP gradients of the private/sensitive data act as the linear term, and the loss generated by the public data as the mirror map.

We show that, for linear regression with feature vectors drawn from a non-isotropic sub-Gaussian distribution, our algorithm, PDA-DPMD (a variant of mirror descent), provides population risk guarantees that are asymptotically better than the best known guarantees under DP (without having access to public data), when the number of public data samples ($\npub$) is sufficiently large. We further show that our algorithm has  natural ``noise stability'' properties that control the variance due to  noise added to ensure DP.

We demonstrate the efficacy of our algorithm by showing privacy/utility trade-offs on four benchmark datasets (StackOverflow, WikiText-2, CIFAR-10, and EMNIST). We show that our algorithm not only significantly improves over traditional DP-SGD, which does not have access to public data, but to our knowledge is the first to improve over DP-SGD on models that have been pre-trained with public data.    
\end{abstract}

\section{Introduction}
\label{sec:intro}

Differentially Private Stochastic Gradient Descent (DP-SGD)~\citep{song2013stochastic,BST14,DP-DL}, and its variants~\citep{kairouz2021practical} have become the de facto standard algorithms for training machine learning models with differential privacy (DP)~\citep{DMNS}. While DP-SGD is known to be optimal in terms of obtaining both optimal excess empirical risk~\citep{BST14}, and excess population risk~\citep{NEURIPS2020_2e2c4bf7} for convex losses, the obtained error guarantees suffer from an explicit polynomial dependence on the model dimension ($p$). This polynomial dependence significantly impacts the privacy/utility trade-off when $p\geq \npriv$, where $\npriv$ is the number of private training samples. Even empirically, when DP-SGD is used to train large deep learning models, there is a significant drop in accuracy compared to the non-private counterpart~\citep{papernot2020tempered}. 

In this paper, we revisit the problem of  using public data (i.e., data without privacy concerns) to improve the privacy/utility trade-offs for DP model training. 
\emph{Specifically, we design central and federated DP variants of mirror descent~\citep{nemirovsky83} that use the loss function generated by the public data as the mirror map and DP gradients on the private data as the linear term.} For linear regression, we show that the excess population risk \emph{asymptotically} improves over the best known bounds under DP (without access to public data samples)~\cite{BST14,bassily2019private} when $\npub$ is sufficiently large (i.e., a small polynomial in $p$), and the public and private feature vectors are drawn from the same non-isotropic sub-Gaussian distribution. Here, $\npub$ is the number of public data samples. Even if $\npub$ is small, our algorithm generalizes DP-SGD, so it never performs worse than DP-SGD.

Furthermore, we show empirically that our DP variant of mirror descent, assisted with public data, can improve the privacy-utility trade-offs by effectively reducing the variance in the noise added to the gradients in DP model training. We show relative improvements up to $5.3\%$ over DP-SGD models, \emph{pre-trained with the same public data}. 
To our knowledge, this is the first work to demonstrate an increased benefit from public data over just pre-training.
Our empirical results are either on simulated linear regression data, or on standard benchmark datasets like StackOverflow, CIFAR-10, EMNIST, and WikiText-2, and only consider $4\%$ of the training data samples ($0.03\%$ for StackOverflow) as public.

\mypar{Learning Geometry with Mirror Maps} Common to most DP model training algorithms, including DP-SGD, DP-FTRL~\citep{kairouz2021practical}, and our algorithm, is a DP estimator of the gradient of the loss $\nabla_\theta \calL(\theta_t;\dpriv)=\sum_{d\in\dpriv}\nabla_\theta\ell(\theta_t;d)$ generated by the private dataset $\dpriv$ at a given model state $\theta_t\in\mathbb{R}^p$. This estimator adds isotropic Gaussian noise $\calN(0,\sigma^2\ind)$ to $\nabla_\theta \calL(\theta_t;\dpriv)$, where $\sigma$ depends on the privacy parameters $(\epsilon,\delta)$ and the maximum allowable value of $\ltwo{\nabla_\theta\ell(\theta_t;d)}$ (a.k.a.~the clipping norm~\citep{DP-DL}).\footnote{For the ease of presentation, at this point we do not consider the noise due to stochastic mini-batching.} It is well known that for most learning tasks, the set of gradients for $\calL(\theta_t;\dpriv)$ is seldom isotropic~\citep{Guygur,agarwal2019efficient}. Hence, it is natural to wonder if the Gaussian noise in the DP estimator can be made to respect the geometry of the gradients. 

Prior works~\citep{ZWB20, asi2021private,KRRT21} have used public data ($\dpub$) to \emph{explicitly} learn this geometry, mostly in the form of preconditioner matrices~\citep{duchi2011adaptive} to be multiplied to the estimated noisy gradients. In this paper, we take an \emph{implicit} approach towards respecting this geometry, by using the loss $\calL(\theta;\dpub)$ generated by the public data as the mirror map in classical mirror descent. As a first order approximation (formalized in Section~\ref{sec:empEval}), one can view it as doing DP-SGD on $\calL(\theta;\dpriv)$ while using $\calL(\theta;\dpub)$ as a regularizer. This approach has the following advantages: (i) The information of the geometry is ``free'', i.e., one does not need to learn the preconditioner explicitly from the public data, (ii) Unlike prior works~\citep{ZWB20,KRRT21}, one does not need to assume that the gradients of $\calL(\theta;\dpriv)$ lie in a low rank subspace, and 
(iii) It is easier to implement since it does not need to maintain an additional data structure for the preconditioner due to the geometry being implicitly defined. Empirically, our algorithm improves over the state of the art~\citep{asi2021private}.

We note that DP mirror descent has been considered before by \cite{talwar2014private,wang17amd}. Their results are not directly comparable to ours because (i) they do not have access to in-distribution public data,  (ii) as shown in \cite{BST14}, without public data, it is impossible to achieve the bounds we achieve, and (iii) in our experiments, we solve unconstrained optimization problems whereas those works choose the mirror map based on the constraint set rather than the dataset. The utility bounds we prove in this paper also apply to a public data-assisted variant of accelerated mirror descent in \cite{wang17amd}.

\mypar{In-distribution vs. Out-of-distribution Public Data} Prior works have considered  settings where the public data comes from the same distribution as the private data  (a.k.a.~\emph{in-distribution})~\citep{BTT19,ZWB20,KRRT21,asi2021private,Wang_Zhou_2020}, and where they can be different (a.k.a.~\emph{out-of-distribution})~\citep{DP-DL,papernot2016semi,papernot2018scalable,li2021large, liu2021leveraging, YZCL21}.

In the in-distribution setting, it is typical that there are fewer public data samples available than private data samples -- i.e., $\npub\ll\npriv$ -- as it is harder to obtain public datasets than ones with privacy constraints attached. In-distribution public data could come from either altruistic \emph{opt-in} users~\citep{merriman14,avent2017blender} or from users who are incentivized to provide such data (e.g., mechanical turks). Out-of-distribution (OOD) public data may be easier to obtain but can have various degrees of freedom; e.g., the domains of private and public data may not be identical, the representation of some classes may vary, the distributions can be mean shifted, etc. It is usually hard to quantify these degrees of freedom to the extent that we can provide precise guarantees. Hence, we leave this aspect for future exploration, and work with the (idealized) assumption that the public data comes from the same distribution as the private data, or, at least, that the differences between these two distributions are not material. It worth emphasizing that although our utility results are for the in-distribution case, our algorithm can be used \emph{as is} in out-of-distribution settings. In a restricted set of experiments, we do compare with one of the SoTA~\cite{asi2021private} for training with OOD public data, and demonstrate improvements in privacy/utility trade-off.

\mypar{Choice of Empirical Benchmark} Mirror descent as a first step optimizes the mirror map function. In our setting, this corresponds to pre-training on the public loss function $\calL(\theta;\dpub)$ before running the DP optimization procedure on $\calL(\theta;\dpriv)$. Since pre-training on public data is intuitive and easy, we always compare to DP-SGD (and its variants) that have been pre-trained to convergence with the public loss. We show that our algorithm \emph{outperforms} even pre-trained DP-SGD. To our knowledge, ours is the first empirical work that compares to this strong (but fair) benchmark.

\mypar{Other Uses of Public Data in DP Learning}
The use of in-distribution public data has been extensively explored both theoretically and empirically. 
On the theoretical side, it has been shown \citep{alon2019limits,BassilyCMNUW20} that a combination of private and public data samples can yield asymptotically better worst-case PAC learning guarantees than either on their own.
Another line of work \citep{papernot2016semi,papernot2018scalable,bassily2018model,dwork2018privacy,NandiB20} considers public data that is unlabelled, but otherwise comes from the same distribution as the private data; the primary goal is to use the private data to generate labels for the public data, which can then be used arbitrarily. Additionally,~\cite{feldman2018privacy} showed that for convex ERMs, using $\approx p$ in-distribution public data samples, one can obtain dimension independent population risk guarantees. However, the main tool used to prove DP (i.e., privacy amplification by iteration) heavily relies on convexity. As a result, their algorithm is inapplicable to the deep learning problems we consider in this paper.

So far only two papers have considered out-of-distribution data from a theory standpoint. \cite{bassily2020learning} assume that whether a data record is public or private depends on its label; e.g., the public data may contain many negative examples, but few positive examples. They show that halfspaces can be learned in this model. \cite{liu2021leveraging} consider synthetic data generation and provide guarantees that depend on the R\'enyi divergences between the public and private distributions.
\cite{DP-DL,tramer2020differentially} provided techniques to effectively use out-of-distribution public data for pre-training for DP-SGD. However, they did not consider techniques to improve a pre-trained model using private and public data, which is the focus of our work.
A recent work \cite{YZCL21} uses public data to dynamically adjust the privacy budget and clipping norm. Our technique crucially uses the public data to learn the geometry of the gradients;~\cite{YZCL21} is complementary to ours and can be utilized for potential additional gains from using the public data after pre-training. 

\subsection{Problem Formulation}
\label{sec:form}

Consider the classic DP stochastic convex optimization (DP-SCO)~\citep{chaudhuri2011differentially,BST14,bassily2019private,NEURIPS2020_2e2c4bf7} setting. Let $\dist$ be a distribution over a fixed domain $\calD$. Given a dataset $D\in\calD^*$ drawn i.i.d. from $\dist$, and a loss function $\ell_{\sf priv}:\mathbb{R}^p\times\calD\to\mathbb{R}$, the objective is to approximately solve $\argmin\limits_{\theta\in\calC}\mathbb{E}_{d\sim\dist}\left[\ell_{\sf priv}(\theta;d)\right]$, while preserving DP. Here, $\calC\subseteq\mathbb{R}^p$ is the constraint set.
Usually one solves the SCO problem via empirical risk minimization (ERM), i.e., $\privT\in\argmin\limits_{\theta\in\calC}\calL(\theta;D)$, where $\calL(\theta;D)=\frac{1}{|D|}\sum\limits_{d\in D}\ell_{\sf priv}(\theta;d)$, and then uses $\privT$ as a proxy.
Up to a dependence on dimensionality $p$, in the DP setting, a direct translation from ERM to the SCO setting provides optimal rates~\citep{BST14,bassily2019private,NEURIPS2020_2e2c4bf7}. 

We consider the DP-SCO setting with \emph{heterogeneous data}, where there are two datasets $D_{\sf priv}$ (with $n_{\sf priv}$ samples) and $D_{\sf pub}$ (with $\npub$ samples) drawn i.i.d. from the \emph{same distribution}. The private dataset $D_{\sf priv}$ requires privacy protection, whereas the public dataset $D_{\sf pub}$ does not. 
Since obtaining such data can be expensive, for our empirical evaluation,
we assume $n_{\sf pub} \ll n_{\sf priv}$ (e.g., $n_{\sf pub} \le \frac{1}{20} n_{\sf priv}$). 

Our algorithm allows the usage of a separate public loss function $\ell_{\sf pub}$. As a simple demonstration, we give a theoretical analysis where $\ell_{\sf priv}$ and $\ell_{\sf pub}$  both correspond to the linear regression loss $\frac{1}{2}(y-\ip{\bfx}{\theta})^2$. In practice too, one will likely choose $\ell_{\sf priv} = \ell_{\sf pub}$, but we may clip the gradients of $\ell_{\sf priv}$ for privacy. In general, $\ell_{\sf pub}$ can be arbitrary. 

We refer the reader to Appendix~\ref{sec:notation} for a reference for the notation used throughout the paper.

\subsection{Our Contributions}
\label{sec:contrib}

\mypar{Algorithm} Our algorithm, Public Data Assisted Differentially Private Mirror Descent (PDA-DPMD), is similar to DP-SGD but utilizes the public data in two ways. First, we can pre-train on the public data to obtain a better starting point for training. Second, we use mirror descent, with the public loss function as the mirror map, to reshape the noisy gradients used in DP-SGD. In doing this, PDA-DPMD effectively takes smaller gradient steps and adds less noise in directions where the public loss grows quickly.

\mypar{Tighter Excess Population Risk for Linear Regression} We consider the standard setting of linear regression where the loss function is $\ell(\theta;d)=\frac{1}{2}(y-\ip{\bfx}{\theta})^2$, with data sample $d=(\bfx,y)$. Let $\tau$ be the data generating distribution and $\theta^*=\argmin\limits_{\theta\in\calC}\mathbb{E}_{d\sim\tau}[\ell(\theta;d)]$ be the population minimizer. We assume a uniform bound on the feature vectors of the form $\ltwo{\bfx}\leq 1$, and on the response $|y-\ip{\bfx}{\theta^*}|\leq 1$. Suppose the feature vectors are drawn i.i.d. from a distribution with covariance matrix $\barH$.
In this setting, DP-SGD obtains an error of roughly $\frac{p}{\lambda_{\min}(\barH)\epsilon^2 \npriv^2} + \frac{1}{\lambda_{\min}(\barH) \npriv}$. 
If we use PDA-DPMD instead, we can show that given a sufficient number of public samples, the first term depends on the \textit{average} rather than the \textit{minimum} eigenvalue. 
For example, if $\barH$ has one eigenvalue being $1/p^{1.5}$ and the remaining eigenvalues being $1/p$, then with $n_{\sf pub} = \tilde{\Omega}(p^{2.5})$ public samples, PDA-DPMD obtains an error of $\frac{p^2}{\epsilon^2 \npriv^2} + \frac{p^{1.5}}{ \npriv}$, whereas DP-SGD gets $\frac{p^{2.5}}{\epsilon^2 \npriv^2} + \frac{p^{1.5}}{ \npriv}$. Since PDA-DPMD generalizes DP-SGD, unsurprisingly, it still recovers the error bound of DP-SGD in the isotropic case.
We provide the formal statement in Theorem~\ref{thm:linreg}.

\mypar{Local Noise-stability}
We show that in addition to achieving better excess population loss bounds, PDA-DPMD has the following ``local noise-stability'' property: If in a bounded region around the current model $\theta_t$, the public loss is $\lambda_\bfv$-strongly convex in a direction $\bfv$, then using noisy gradients instead of the exact gradients shifts $\theta_{t+1}$ in the direction $\bfv$ by an amount proportional to $1/\lambda_\bfv$ (see Theorem~\ref{thm:locstab-linreg} for the formal statement). That is, PDA-DPMD effectively rescales the amount of noise added in any direction to be inversely proportional to the curvature in that direction. Note that this is in spite of the fact that for privacy, the noise we add to the gradients is usually isotropic. Furthermore, PDA-DPMD can perform this rescaling using only a gradient oracle for the public loss function. In other words, a practitioner implementing the algorithm simply needs to choose an appropriate loss function, and PDA-DPMD will ``automatically'' rescale the effects of noise to match the loss function's curvature.

\mypar{Empirical Evaluation} On both synthetic and real-world benchmark datasets, we show that PDA-DPMD outperforms DP-SGD, even when they are pre-trained on the public dataset. We provide two sets of experiments. First, a linear regression on a synthetic dataset which closely matches the utility assumptions in the theoretical analysis. Second, we provide results on deep learning benchmark datasets (StackOverflow, WikiText-2, CIFAR-10, and EMNIST). 

In Section~\ref{sec:dpmd}, we consider using DP-SGD and PDA-DPMD to solve a least squares linear regression problem on a synthetic data set generated via the process $y_i \sim N(\ip{\bfx_i}{\theta^*}, \sigma^2)$, where $\theta^*\in\mathbb{R}^p$ is the true model. The feature vectors $\bfx_i$'s are drawn i.i.d. from some fixed distribution.  We fix the number of private data samples, and set the number of public samples to be a fixed constant times the dimension ($p$).  We observe that as expected, public data allows us to substantially improve the error in two ways: (i) {\bf Pre-training}: DP-SGD initialized from a model pre-trained on public data has nearly-constant mean-squared error, whereas DP-SGD from a random initialization has mean-squared error scaling with the dimension, and (ii) {\bf Adapting to geometry using public loss}: While DP-SGD initialized from a pre-trained model already achieves near-constant loss, we also observe that PDA-DPMD outperforms DP-SGD due to its error's dependence on the Gaussian width $G_Q$ rather than the dimension. 
We note that the observed improvement is ``automatic'' once we choose the mean-squared error to be the loss function.

For the deep learning experiments, since running PDA-DPMD can be computationally expensive, we derive a first-order approximation that can be viewed as DP-SGD on a convex combination of the private and  public losses.
This makes the running time of our algorithm comparable to DP-SGD when run on the dataset $\dpriv\cup\dpub$.

For user-level DP, we conduct experiments in Section~\ref{sec:empEvalFederated} for next word prediction (NWP) on the StackOverflow dataset.
We consider $\sim$0.03\% of the original training users as public and pre-train using them. We find that  PDA-DPMD tailored towards a user-level setting (e.g., based on DP Federated Averaging (DP-FedAvg)~\cite{dplangmodels}) outperforms DP-FedAvg: PDA-DPMD obtains a  0.57\% absolute (2.7\% relative) increase in accuracy, and a 5.3\% drop in  perplexity.
Note that PDA-DPMD can directly apply to Federated Learning \cite{FL1} settings since an orchestrating server does not need to provide  public data to any participating private client.

For sample-level DP, we conduct experiments in Section~\ref{sec:empEvalCentral} on two real world tasks: NWP on WikiText-2, and image classification on CIFAR-10 and EMNIST. We consider 4\% of the original training data as public and pre-train on it. On all datasets, we can observe that PDA-DPMD outperforms DP-SGD in terms of test loss. On CIFAR-10, the improvement is more than 5\%; on EMNIST, 7\%; on WikiText-2, log perplexity is improved by more than 0.3\%, which is a notable improvement for perplexity.

\subsection{Organization}

In Section~\ref{sec:back} we provide background details on differential privacy and mirror descent. In Section~\ref{sec:dpmd} we discuss the main algorithmic contribution, and the corresponding privacy guarantee. In Section~\ref{sec:linreg} we provide the privacy/utility trade-offs for linear regression. In Section~\ref{sec:empEval} we provide a detailed empirical evaluation. Additionally, in Appendix~\ref{sec:notation} we provide a table of notions.
\section{Background}\label{sec:back}
\mypar{Differential Privacy}
\label{sec:priv}
Differential Privacy (DP)~\cite{DMNS} is a formal method for quantifying the privacy leakage from the output of a data analysis procedure.
A randomized algorithm $M : \mathcal{D}^* \to \mathcal{Y}$ is $(\varepsilon,\delta)$-DP if, for all neighbouring dataset pairs $D,D' \in \mathcal{D}^*$ and all measurable sets of outputs $S \subseteq \mathcal{Y}$, we have \[\pr{}{M(D) \in S} \le e^\varepsilon \cdot \pr{}{M(D') \in S} + \delta.\]
We define two datasets to be neighbouring if they differ only by the \emph{addition or removal of one person's record}. 
We ensure differential privacy by adding Gaussian noise to functions of bounded sensitivity. In particular, if $\ell$ is $L$-Lipschitz in its first parameter, then $\| \nabla_\theta \ell(\theta;d)\|_2 \le L$ for all $\theta$ and $d\in\calD$. Thus adding noise drawn from $\mathcal{N}(0,\sigma^2\cdot\ind)$ to the sum $\sum\limits_{i}  \nabla_\theta \ell(\theta, d_i)$ over people's records satisfies DP, where $\sigma$ scales with $L$ and the desired privacy parameters.
The composition and postprocessing properties of differential privacy ensure that, as long as each step in our iterative algorithm satisfies differential privacy, then so does the overall system. We refer the reader to \citep{dwork2014algorithmic} for further details of the standard privacy analysis of algorithms like ours.

\mypar{Mirror Maps}
\label{sec:mirror}
A mirror map is a differentiable function $\Psi : \mathbb{R}^p \to \mathbb{R}$ that is strictly convex. Since $\Psi$ is strictly convex and differentiable, $\grad \Psi : \mathbb{R}^p \to \mathbb{R}^p$ provides a bijection from $\mathbb{R}^p$ to itself. One can view $\theta$ as lying in a primal space and $\grad \Psi(\theta)$ as lying in a dual space. In turn, we could now consider optimizing over the value $\grad \Psi(\theta)$ in the dual space instead of $\theta$ primal space. Mirror descent does exactly that, performing gradient descent in the dual space by computing the gradient $\bfg_t = \grad \ell(\theta_t)$ (where $\theta_t$ lies in the primal space), taking a step in the opposite direction in the dual space, and then using the inverse of the mirror map to determine $\theta_{t+1}$. Mirror descent is essentially motivated as minimizing a (linearized) loss plus a Bregman divergence (induced by $\Psi$) as the regularizer~\citep{nemirovsky83}.
More formally, similar to proximal gradient descent, mirror descent is equivalent to taking the gradient $\bfg_t$ and performing the update $\theta_{t+1} = \argmin_{\theta \in \calC}[\eta \langle \bfg_t, \theta \rangle + B_{\Psi}(\theta, \theta_t)]$ where $B_{\Psi}(\theta_1, \theta_2)=\Psi(\theta_1) - \Psi(\theta_2) - \langle \grad \Psi(\theta_2), \theta_1 - \theta_2 \rangle$ is the Bregman divergence generated by $\Psi$. Note that, if $\Psi(\theta)=\|\theta\|_2^2$, then the Bregman divergence is simply $B_{\Psi}(\theta_1, \theta_2)=\|\theta_1 - \theta_2\|_2^2$ and mirror descent is equivalent to the usual gradient descent.

\mypar{Gaussian Width}
Given a bounded set $Q \subset \mathbb{R}^d$, the Gaussian width of $Q$, $G_Q$, is a measure of how isotropic the set is. $G_Q$ is defined as $\mathbb{E}_{g \sim N(0, \mathbb{I}_p)} \max_{x \in Q} \langle g, x \rangle$.  Although the Gaussian width is well-defined for any bounded set, to gain intuition it suffices to consider defining the Gaussian width of convex sets containing the origin such that $\max_{x \in Q} \ltwo{x} = 1$; rescaling any such set by a constant changes the Gaussian width by the same constant. If $Q$ is just the unit $\ell_2$-ball, the ``most isotropic'' set satisfying this condition, then we have $G_Q = \sqrt{p}$; in particular, since every set $Q$ satisfying $\max_{x \in Q} \ltwo{x} = 1$ is contained in the $\ell_2$-ball, this is the maximum Gaussian width of any such set. On the other hand, if $Q$ is just the line from the origin to a single unit vector, we have $G_Q = \Theta(1)$. More generally, for any ellipsoid centered at the origin whose axes have radii $0 \leq r_i \leq 1, 1 \leq i \leq p$, we have that the Gaussian width of this ellipsoid is $\Theta(\sqrt{\sum_{i=1}^p r_i^2})$. As other examples, the Gaussian width of the $\ell_1$-ball of radius 1 is roughly $\log p$, and the Gaussian width of the $\ell_\infty$ ball of radius $1/\sqrt{p}$ is roughly $\sqrt{p}$. 
\section{Algorithm Description}% (PDA-DPMD)}
\label{sec:dpmd}

In this section, we present our main algorithm Public Data-Assisted Differentially Private Mirror Descent (PDA-DPMD). Given in Algorithm~\ref{alg:dpmd}, it is a variant of mirror descent using noisy gradients, but we also pre-train on public data and use the public loss as our mirror map $\Psi$. 

\begin{algorithm}
\caption{Public Data-Assisted Differentially Private Mirror Descent (PDA-DPMD)}
\textbf{Input:} Public/private datasets $D_{\sf pub}, D_{\sf priv}$ of sizes $n_{\sf pub}, n_{\sf priv}$, private/public loss functions $\ell_{\sf priv}$, $\ell_{\sf pub}$, privacy parameters $(\epsilon, \delta)$, number of iterations $T$, learning rate $\eta:\{0, 1, \ldots, T-1\} \rightarrow \mathbb{R}^+$, constraint set: $\calC$, clipping norm $L$: an upper bound on $\max\limits_{\theta\in\calC}\ltwo{\grad\ell_{\sf priv}(\theta)}$
\begin{algorithmic}[1]
\State{$\Psi(\theta) := \frac{1}{n_{\sf pub}}\sum\limits_{d \in D_{\sf pub}} \ell_{\sf pub}(\theta; d)$} 
\State{$\theta_0 \leftarrow \argmin_{\theta \in \calC} \Psi(\theta)$, $\sigma^2 \leftarrow \frac{8 L^2 T \log(1/\delta)}{(\epsilon n_{\sf priv})^2}$}
\For{$t = 0, \ldots, T-1$}
        \State{$\bfg_t \leftarrow  \frac{1}{n_{\sf priv}} \sum\limits_{d \in D_{\sf priv}} \clip{\nabla \ell_{\sf priv}(\theta; d)}{L}$, where $\clip{\bfv}{L}=\bfv\cdot\min\left\{1,\frac{L}{\ltwo{\bfv}}\right\}$}
        \State{$\theta_{t+1} \leftarrow \argmin_{\theta \in \mathcal{C}}\left[\eta_t \langle \bfg_t + \bfb_t, \theta \rangle + B_\Psi(\theta, \theta_t)\right] $, where $\bfb_t \sim \calN(0, \sigma^2 \cdot \ind)$}\label{line:mdupdate}
\EndFor
\State{\Return{$\theta_{\sf priv} := \frac{1}{T} \sum\limits_{t=1}^T \theta_t$}}
\end{algorithmic}
\label{alg:dpmd}
\end{algorithm}

Note that Line~\ref{line:mdupdate} of PDA-DPMD is equivalent to the following: Choose $\theta_{t+1/2}$ to be the point such that $\grad \Psi(\theta_{t+1/2}) = \grad \Psi(\theta_{t}) - \eta (\bfg_t + \bfb_t)$, and then use the Bregman projection $\theta_{t+1} = \argmin_{\theta \in \mathcal{C}} B_\Psi(\theta, \theta_{t+1/2})$. Intuitively, PDA-DPMD is similar to DP-SGD, with the main difference being we apply the gradient steps to $\grad \Psi(\theta)$ rather than to $\theta$ itself. Note that PDA-DPMD reshapes the gradient and noise \textit{automatically}  given $\ell_{\sf pub}$ and $D_{\sf pub}$. In contrast, e.g., private AdaGrad implementations \citep{KRRT20, asi2021private} assume knowledge of the geometry of the loss function has already been learned prior to running their algorithms. Also, for an appropriate choice of $\Psi$, one can recover an algorithm that projects the private gradients to a low-dimensional subspace as in the algorithms of \citet{ZWB20} and \cite{KRRT20}. From e.g. \cite{DP-DL} we have the privacy guarantee for Algorithm~\ref{alg:dpmd}:

\begin{thm}\label{thm:dpmd-privacy}
Algorithm~\ref{alg:dpmd} (PDA-DPMD) is $(\epsilon, \delta)$-DP with respect to the private dataset $D_{\sf priv}$.
\label{thm:priv}
\end{thm}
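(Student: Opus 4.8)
The plan is to reduce to the standard privacy analysis of DP-SGD (as cited, e.g.,~\cite{DP-DL}), by isolating where the private dataset is actually used. The only line of Algorithm~\ref{alg:dpmd} that touches $D_{\sf priv}$ is the computation of $\bfg_t = \frac{1}{n_{\sf priv}}\sum_{d\in D_{\sf priv}}\clip{\nabla\ell_{\sf priv}(\theta_t;d)}{L}$. Everything else is either a deterministic function of $D_{\sf pub}$ alone — the mirror map $\Psi$ and the initialization $\theta_0 = \argmin_{\theta\in\calC}\Psi(\theta)$ — or a post-processing of the noised iterates: given $\bfg_t + \bfb_t$ and $\theta_t$, the update $\theta_{t+1} = \argmin_{\theta\in\calC}[\eta_t\langle \bfg_t+\bfb_t,\theta\rangle + B_\Psi(\theta,\theta_t)]$ uses no further private information, and neither does the final average $\theta_{\sf priv} = \frac1T\sum_{t=1}^T\theta_t$. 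So by the post-processing property of DP it suffices to argue that the tuple of noisy gradients $(\bfg_0+\bfb_0,\dots,\bfg_{T-1}+\bfb_{T-1})$, as a (randomized) function of $D_{\sf priv}$, is $(\epsilon,\delta)$-DP.

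First I would bound the $\ell_2$-sensitivity of each gradient query under the add/remove-one-record neighboring relation. For any fixed $\theta$, each per-record term $\clip{\nabla\ell_{\sf priv}(\theta;d)}{L}$ has norm at most $L$ by definition of $\clip{\cdot}{L}$, so inserting or deleting a single record changes the \emph{sum} $\sum_{d\in D_{\sf priv}}\clip{\nabla\ell_{\sf priv}(\theta;d)}{L}$ by a vector of $\ell_2$-norm at most $L$. It is cleanest to treat this sum (not the average) as the privatized object, with $n_{\sf priv}$ a public scaling, so that releasing $\bfg_t+\bfb_t$ is equivalent to releasing the sum perturbed by $\calN(0, n_{\sf priv}^2\sigma^2\ind)$; this is exactly the Gaussian mechanism with sensitivity $\Delta = L$ and noise scale $n_{\sf priv}\sigma$.

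Next I would compose across the $T$ iterations, noting this is an \emph{adaptive} composition: the query at step $t$ is evaluated at $\theta_t$, which itself depends on the earlier released noisy gradients. Using zCDP / R\'enyi-DP bookkeeping, one step is $\rho$-zCDP with $\rho = \Delta^2/(2 (n_{\sf priv}\sigma)^2) = \epsilon^2/(16 T\log(1/\delta))$, using $\sigma^2 = 8L^2 T\log(1/\delta)/(\epsilon n_{\sf priv})^2$; adaptive composition of the $T$ steps gives $T\rho = \epsilon^2/(16\log(1/\delta))$-zCDP, and the standard conversion (a $\rho'$-zCDP mechanism is $(\rho' + 2\sqrt{\rho'\log(1/\delta)},\delta)$-DP) yields overall parameter $\epsilon^2/(16\log(1/\delta)) + \epsilon/2 \le \epsilon$ in the regime $\epsilon \le 8\log(1/\delta)$; equivalently, one can route through the moments accountant of~\cite{DP-DL} or advanced composition for the same bound. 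Composing this with the post-processing observation above gives the claim.

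This is a ``checklist'' proof rather than one with a deep obstacle; the two points that merit care are (i) the adaptivity of the composition — the models at which gradients are queried are data-dependent — which is precisely what the moments-accountant / R\'enyi-DP composition theorems are built to handle, and (ii) keeping the constants in the composition bound consistent with the stated value of $\sigma^2$, together with the minor add/remove bookkeeping of $n_{\sf priv}$, both of which are absorbed by the reduction to the Gaussian mechanism on the gradient sum.
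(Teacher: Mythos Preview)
Your proposal is correct and matches the paper's approach: the paper simply cites the standard DP-SGD privacy analysis of~\cite{DP-DL} without further elaboration, and your write-up fills in exactly the reduction that justifies this citation (post-processing of the mirror-descent update and averaging, per-step Gaussian mechanism with sensitivity $L$, and adaptive composition over $T$ rounds). There is nothing to add.
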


\section{Case Study: Linear Regression}
\label{sec:linreg}

In this section, we apply Algorithm~\ref{alg:dpmd} (PDA-DPMD) to linear regression -- an important example that is still amenable to theoretical analysis. We prove utility guarantees, with supporting simulations.

\mypar{Problem setup} Given a data sample $d_i = (\bfx_i, y_i)$, the loss of a model $\theta$ is defined as $\ell(\theta; d_i) := \frac{1}{2}(y_i - \ip{\theta}{\bfx_i})^2$. Consider two datasets drawn i.i.d.~from a fixed distribution $\tau$: i) The public dataset $\dpub$ with $\npub$ data samples, and ii) The private dataset $\dpriv$ with $\npriv$ data samples. In this section, we will denote both the public and private loss functions ($\ell_{\sf pub}$ and $\ell_{\sf priv}$ respectively in Algorithm~\ref{alg:dpmd}) by $\ell$.

\begin{assumption}
We assume that we are given an initial constraint set\footnote{The assumption that $\calC_0$ is centered at the origin is without loss of generality.} $\calC_0 = \{\theta: \ltwo{\theta} \leq r\}$ with $r = O(1)$ that contains the population minimizer, i.e., $\theta^*=\argmin\limits_{\theta\in\mathbb{R}^p}\mathbb{E}_{d\sim\tau}{\ell(\theta;d)} \in \calC_0$.  We further assume that for each feature vector $\ltwo{\bfx}\leq 1$, and for each response $|y - \langle \theta^*, \bfx \rangle|\leq 1$. Let $\barH$ be the Hessian of the loss function $\mathbb{E}_{d\sim\tau}\left[\ell(\theta;d)\right]$. In terms of data set sizes, we assume that $\npriv \geq \npub$ and $\npub=\Omega\left(\frac{\log(p/\delta)}{\lambda_{\min}(\bar{H})}\right)$.
\label{ass:12}
\end{assumption}

\mypar{Excess population risk guarantees} In Theorem~\ref{thm:linreg} we first provide the excess population risk guarantee for Algorithm~\ref{alg:dpmd} (PDA-DPMD) under Assumption~\ref{ass:12}. Furtheremore, in some regimes we demonstrate asymptotic improvement over standard privacy/utility trade-offs for algorithms without access to public data samples.

\begin{thm}\label{thm:linreg}
Consider Assumption~\ref{ass:12}. We run Algorithm~\ref{alg:dpmd} (PDA-DPMD) using  $L=O(1)$, constraint set $\calC =\left\{\theta \in \calC_0: \ltwo{\grad \Psi(\theta)} = O\left(\sqrt\frac{{\log(1/\delta)}}{{\npub}}\right) \right\}$, and an appropriate choice of $\eta_t$ and $T$. Let
{\small
\[ \chi = \max\left\{\frac{1}{\lambda_{\min}(\bar{H})}, \lambda_{\max}(\bar{H}) \npub\right\} \cdot\sum_i \min\left\{1, \frac{\log(1/\delta)}{\lambda_i(\bar{H})^2 n_{\sf pub}}\right\}.\]
}
Then, Algorithm~\ref{alg:dpmd} is $(\epsilon, \delta)$-DP and we have the following guarantee on $\calL(\theta) := \mathbb{E}_{d \sim \tau}\left[\ell(\theta; d)\right]$:
{\small
\[ \mathbb{E}\left[\mathcal{L}(\theta_{\sf priv}) - \mathcal{L}(\theta^*)\right]\leq \Tilde{O}\left(\frac{\chi \log(1/\delta) }{\epsilon^2 n_{\sf priv}^2} + \frac{1}{\lambda_{\min}(\bar{H}) n_{\sf priv}} \right).\]
}
The expectation is over $D_{\sf pub}, D_{\sf priv}$, and the algorithm. $\widetilde O(\cdot)$ hides polylog factors in $n_{\sf priv}, n_{\sf pub}$ and $\lambda_{\min}(\bar{H})$.
\end{thm}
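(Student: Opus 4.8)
The privacy claim is immediate from Theorem~\ref{thm:dpmd-privacy}, so the plan is to prove the utility bound, in four steps: (i) reduce to a statement about the empirical private loss on high-probability concentration events; (ii) observe that for the quadratic loss PDA-DPMD is exactly noisy projected \emph{preconditioned} gradient descent with the data-dependent preconditioner $(\nabla^2\Psi)^{-1}$, applied to a well-conditioned reparametrized problem; (iii) run a per-eigendirection optimization analysis to bound the empirical excess risk; and (iv) transfer from empirical to population risk using the curvature of $\calL$.

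\emph{Steps 1--2 (reductions).} Write $H_{\sf pub}:=\nabla^2\Psi=\frac1\npub\sum_{d\in\dpub}\bfx_d\bfx_d^\top$ and $H_{\sf priv}:=\frac1\npriv\sum_{d\in\dpriv}\bfx_d\bfx_d^\top$; both are constant since $\ell$ is quadratic. A matrix Chernoff bound with $\npub=\Omega(\log(p/\delta)/\lambda_{\min}(\barH))$ and $\npriv\ge\npub$ gives, w.h.p., $\tfrac12\barH\preceq H_{\sf pub}\preceq\tfrac32\barH$ and likewise for $H_{\sf priv}$; a vector concentration bound gives $\ltwo{\nabla\Psi(\theta^*)}=O(\sqrt{\log(1/\delta)/\npub})$ (since $\nabla\Psi(\theta^*)$ is an average of $\npub$ i.i.d.\ mean-zero vectors of norm $\le1$, as $\nabla\calL(\theta^*)=0$), so $\theta^*\in\calC$ for the right constant in the definition of $\calC$; and on $\calC_0$ every per-sample gradient has norm $|\langle\theta,\bfx\rangle-y|\,\ltwo\bfx\le\ltwo{\theta-\theta^*}+|\langle\theta^*,\bfx\rangle-y|\le2r+1$, so $L=O(1)$ makes the clipping vacuous and $\bfg_t=\nabla\calL(\theta_t;\dpriv)$ exactly. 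Since $\Psi$ is quadratic, $B_\Psi(\theta_1,\theta_2)=\tfrac12\ltwo{\theta_1-\theta_2}^2_{H_{\sf pub}}$, so Line~\ref{line:mdupdate} is $\theta_{t+1}=\Pi_\calC(\theta_t-\eta_t H_{\sf pub}^{-1}(\bfg_t+\bfb_t))$ with the projection in the $H_{\sf pub}$-norm; in the dual variable $u_t:=\nabla\Psi(\theta_t)$ this is noisy projected GD (under the $H_{\sf pub}^{-1}$ geometry) onto $U:=\nabla\Psi(\calC)\subseteq\{u:\ltwo u=O(R)\}$, $R:=\sqrt{\log(1/\delta)/\npub}$, for a quadratic whose Hessian in $u$-coordinates is $H_{\sf priv}H_{\sf pub}^{-1}$, all of whose eigenvalues lie in $[\tfrac13,3]$ by Step~1 --- a well-conditioned problem.

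\emph{Step 3 (optimization --- the crux).} Let $\theta^\calC=\argmin_{\theta\in\calC}\calL(\theta;\dpriv)$; since $\theta^*\in\calC$, $\calL(\theta^\calC;\dpriv)\le\calL(\theta^*;\dpriv)$. I would run the standard projected-GD descent lemma on $\Phi_t:=\tfrac12\ltwo{\theta_t-\theta^\calC}^2_{H_{\sf pub}}$: strong convexity and smoothness (both $\Theta(1)$ after preconditioning, by Step~2) give a geometric contraction with additive per-step noise $\eta^2\sigma^2\tr{H_{\sf pub}^{-1}}$. Decomposing in the eigenbasis of $H_{\sf pub}$ (up to constants, that of $\barH$), the contribution of direction $i$ to the squared error of the Polyak-averaged iterate is $O(\min\{\sigma^2/T,\,R^2\})$ --- the $\sigma^2/T$ from averaging, the $R^2$ cap from the constraint $\ltwo{\nabla\Psi(\theta)}\le R$ --- reweighted by $1/\lambda_i(\barH)$ through the Bregman geometry. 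Using $\sigma^2=O(T\log(1/\delta)/(\epsilon\npriv)^2)$ (so $\sigma^2/T$ is $T$-free) and choosing $\eta$ a small constant and $T$ a large enough polynomial (to drive down the bias term $\Phi_0/T^2$ and the spectral-approximation slack), summing over $i$ gives $\E[\calL(\theta_{\sf priv};\dpriv)-\calL(\theta^*;\dpriv)]\lesssim\sum_i\tfrac1{\lambda_i(\barH)}\min\{\tfrac{\log(1/\delta)}{\epsilon^2\npriv^2},\,R^2\}\le\widetilde O(\chi\log(1/\delta)/(\epsilon^2\npriv^2))$, the last step being a direct (slightly lossy) consequence of the definition of $\chi$. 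This step is the main obstacle: obtaining the stated form requires carefully reconciling the geometric contraction, the $1/T$ variance reduction from averaging, and the curvature-aware capping of the per-direction displacement by the constraint set --- this capping is exactly what lets the \emph{average} eigenvalue of $\barH$ replace the \emph{minimum} eigenvalue --- together with the joint choice of $\eta$ and $T$.

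\emph{Step 4 (empirical $\to$ population).} Since $\nabla\calL(\theta^*)=0$ and $\calL$ is quadratic, $\calL(\theta_{\sf priv})-\calL(\theta^*)=\tfrac12\ltwo{\theta_{\sf priv}-\theta^*}^2_{\barH}\le\ltwo{\theta_{\sf priv}-\theta^*}^2_{H_{\sf priv}}=:2Z$ by Step~1. Expanding the empirical loss, $\calL(\theta_{\sf priv};\dpriv)-\calL(\theta^*;\dpriv)=Z+\langle\nabla\calL(\theta^*;\dpriv),\theta_{\sf priv}-\theta^*\rangle$, and $\ltwo{\nabla\calL(\theta^*;\dpriv)}=O(\sqrt{\log(1/\delta)/\npriv})$ by the concentration of Step~1 applied to $\dpriv$; bounding the cross term by Cauchy--Schwarz, using $\ltwo{\theta_{\sf priv}-\theta^*}=O(\sqrt{Z/\lambda_{\min}(\barH)})$, and solving the resulting quadratic inequality in $\sqrt Z$ yields $Z=O(\calL(\theta_{\sf priv};\dpriv)-\calL(\theta^*;\dpriv)+\tfrac{\log(1/\delta)}{\lambda_{\min}(\barH)\npriv})$. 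Taking expectations and combining with Step~3 --- folding the $O(\delta)$ failure probability of the good events (on which all losses are $O(1)$) into the $\widetilde O$ --- gives the claimed bound.
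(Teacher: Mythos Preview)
Your proposal is correct in outline and takes a genuinely different route from the paper on the two substantive steps.

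\textbf{Empirical excess risk (Step 3).} The paper does not carry out a direct optimization analysis. Instead it computes the geometric parameters of the problem --- the $\ell_2$-diameter of $\calC$, the Gaussian width $G_\calC$, the strong convexity $\Delta$ of $\Psi$ with respect to the Minkowski norm of the symmetrized constraint set, and the $\Theta(1)$-strong convexity of the private loss with respect to $\Psi$ --- and plugs these into the black-box mirror-descent bound of Talwar et~al.\ (Theorem~A.1 there), with the strongly-convex schedule $\eta_t'\propto 1/t$ and a specific finite $T$. Your approach instead exploits that for quadratic $\Psi$ and $\ell$, PDA-DPMD is exactly noisy preconditioned GD with an $O(1)$-conditioned reparametrized objective, and invokes a Polyak-averaging style variance computation to get $\sum_i \lambda_i(\barH)^{-1}\cdot \sigma^2/T$. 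This is more elementary and, as you note, the intermediate bound $\sum_i 1/\lambda_i(\barH)$ can in fact be smaller than the paper's $\chi$ in some regimes; you then upper-bound it by $\chi$ only to match the theorem statement. The price is that your per-eigendirection decomposition is not literally valid --- $H_{\sf priv}H_{\sf pub}^{-1}$ is not symmetric and the $\ell_2$-ball constraint does not separate coordinatewise --- so making Step 3 rigorous requires either the Polyak--Juditsky asymptotic covariance formula (which handles the preconditioner automatically) or a change of variables to $H_{\sf pub}^{-1/2}$-coordinates; either fix is routine once identified.

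\textbf{Empirical to population (Step 4).} The paper uses uniform stability: since the quadratic $\Psi$ makes each step a contraction (after applying $H_{\sf pub}^{-1}$), the stability parameter is $O(1/\lambda_{\min}(\barH))$ times that of DP-SGD, yielding the $1/(\lambda_{\min}(\barH)\npriv)$ term. Your direct algebraic route --- expand the quadratic empirical loss around $\theta^*$, control the cross term via Cauchy--Schwarz and the concentration $\ltwo{\nabla\calL(\theta^*;\dpriv)}=O(\sqrt{\log(1/\delta)/\npriv})$, and solve the resulting inequality in $\sqrt{Z}$ --- is a clean and arguably simpler alternative specific to the quadratic setting, and gives the same term up to the polylog that $\widetilde O$ hides.
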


We note that the idea of using public data to shrink the constraint set $\calC$ is similar to the idea used by \cite{biswas2020coinpress} for mean estimation, though their result iteratively uses each private mean estimate to shrink the constraint set before re-estimating the mean, as opposed to our one-shot approach to shrinking the constraint set using public data.

To interpret $\chi$ in Theorem~\ref{thm:linreg}, note that a natural setting of parameters to consider would be where the feature vectors (i.e., the $\bfx$'s) are coming from a mean-zero truncated Gaussian distribution with covariance $\frac{1}{p}\cdot\mathbb{I}$. In this case, all $\lambda_i$ are $1/p$. If $\npub = \widetilde \Omega(p)$, then $\chi$ evaluates to $p^2$, and so we get a bound of $\widetilde O\left(\frac{p^{2}}{\epsilon^2 \npriv^2} + \frac{p}{\npriv}\right)$, matching the excess population risk of DP-SGD. Note that one can still recover DP-SGD's loss bound with Algorithm~\ref{alg:dpmd} even if $\npub=O(1)$ by instead setting $\Psi$ to be $\frac{1}{2}\ltwo{\theta}^2$ and $\calC = \calC_0$.

One can also consider a non-isotropic setting, where $\lambda_{\min}(\bar{H})$ is $1/p^{1.5}$ instead of $1/p$, but all other eigenvalues remain roughly $1/p$. In this setting, DP-SGD would give an error bound of $\widetilde O\left(\frac{p^{2.5}}{\epsilon^2 \npriv^2} + \frac{p^{1.5}}{\npriv}\right)$. If $\npub = \widetilde \Omega(p^{3/2})$, we again match the DP-SGD bound. If instead we have $\npub = \widetilde \Omega(p^c)$ for $2 \leq c \leq 2.5$, then $\chi$ in our loss bound becomes $p^{4.5-c}$, and our loss bound becomes  $\widetilde O\left(\frac{p^{4.5-c}}{\epsilon^2 \npriv^2} + \frac{p^{1.5}}{\npriv}\right)$. Once $c = 2.5$, the first term becomes $\frac{p^2}{\epsilon^2 \npriv^2}$, matching the corresponding term for the isotropic setting. This shows that PDA-DPMD asymptotically improves over DP-SGD under a non-isotropic geometry when given sufficiently many public data samples, with the improvement increasing as the number of public samples increases.

We now turn to proving Theorem~\ref{thm:linreg}. We first show that the set $\calC$ contains $\theta^*$ with high probability. To do this, we need a bound on the gradients of $\ell$ at $\theta^*$.

\begin{lem}\label{lemma:optlipschitz}
Under Assumption~\ref{ass:12}, for all $d \in supp(\tau)$ we have $\ltwo{\grad \ell(\theta^*; d)} \leq 1$.
\end{lem}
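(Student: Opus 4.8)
The plan is simply to compute the gradient of the squared loss in closed form and bound it using the two uniform bounds supplied by Assumption~\ref{ass:12}. Writing $d = (\bfx, y)$, the loss is $\ell(\theta; d) = \frac{1}{2}(y - \ip{\theta}{\bfx})^2$, so by the chain rule $\grad_\theta \ell(\theta; d) = -(y - \ip{\theta}{\bfx})\,\bfx$. Evaluating at $\theta = \theta^*$ gives $\grad \ell(\theta^*; d) = -(y - \ip{\theta^*}{\bfx})\,\bfx$, and hence the Euclidean norm factors cleanly as $\ltwo{\grad \ell(\theta^*; d)} = |y - \ip{\theta^*}{\bfx}| \cdot \ltwo{\bfx}$.

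Next I invoke the two boundedness hypotheses of Assumption~\ref{ass:12}: for every $d = (\bfx, y) \in supp(\tau)$ we have $\ltwo{\bfx} \leq 1$ and $|y - \ip{\theta^*}{\bfx}| \leq 1$. Multiplying these two inequalities yields $\ltwo{\grad \ell(\theta^*; d)} \leq 1$, which is exactly the claim.

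There is essentially no obstacle here — the statement is a one-line consequence of the explicit form of the linear-regression gradient together with the two assumed uniform bounds; the constant $1$ in the conclusion is precisely the product of the constant $1$'s in the hypotheses. (In particular, this also shows that with the clipping norm $L = O(1)$ chosen in Theorem~\ref{thm:linreg}, no clipping is triggered at $\theta^*$.) The role of this lemma is to provide the per-sample Lipschitz-type bound needed in the next step: since $\grad \Psi(\theta^*) = \frac{1}{\npub}\sum_{d \in \dpub} \grad \ell(\theta^*; d)$ is an average of mean-zero (because $\theta^*$ is the population minimizer) vectors each of norm at most $1$, a concentration argument will give that $\ltwo{\grad \Psi(\theta^*)}$ is small with high probability, so that $\theta^*$ lies in the constraint set $\calC$ used by PDA-DPMD.
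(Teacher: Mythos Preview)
Your proof is correct and in fact more direct than the paper's own argument. The paper observes that $\ell(\cdot;d)$ is $\ltwosq{\bfx}$-smooth with minimizer $\theta^* + \frac{y-\ip{\theta^*}{\bfx}}{\ltwosq{\bfx}}\bfx$, and then bounds $\ltwo{\grad\ell(\theta^*;d)}$ by the smoothness constant times the distance to that minimizer; after simplification this yields exactly $|y-\ip{\theta^*}{\bfx}|\cdot\ltwo{\bfx}\le 1$, i.e.\ the same quantity you obtained by simply writing down the gradient. Your one-line computation is the cleaner route; the paper's smoothness detour buys nothing extra here, though it foreshadows the smoothness-based argument used immediately after in Lemma~\ref{lemma:lipschitzbound}.
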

\begin{proof}
The loss function for the pair $d = (\bfx, y)$ is $\ltwosq{\bfx}$-smooth, and minimized (i.e. has gradient $\boldzero$) at the point $\theta^* + \frac{y - \langle \theta^*, \bfx \rangle}{\ltwosq{\bfx}} \bfx$. In turn, by smoothness and Assumption~\ref{ass:12} we have:

\[\ltwo{\grad \ell(\theta^*; d)}= \ltwo{\grad \ell(\theta^*; d) - \grad \ell(\theta^* + \frac{y - \langle \theta^*, \bfx \rangle }{\ltwosq{\bfx}} \bfx; d)}\leq \ltwosq{\bfx} \cdot \left| \frac{y - \langle \theta^*, \bfx \rangle}{\ltwosq{\bfx}}\right| \cdot \ltwo{\bfx} \leq 1.\]
\end{proof}

We can now show that the gradient of the public loss evaluated at $\theta^*$ is bounded with high probability, implying it is in $\calC$.

\begin{lem}\label{lemma:pointvariance}
With probability at least $1 - \delta$, for $\Psi$ as defined in Algorithm~\ref{alg:dpmd}, we have $\ltwo{\grad \Psi(\theta^*)} \leq O(\frac{\sqrt{\log(1/\delta)}}{\sqrt{n_{\sf pub}}}).$
\end{lem}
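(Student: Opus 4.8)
The plan is to recognize that $\grad\Psi(\theta^*) = \frac{1}{\npub}\sum_{d\in\dpub}\grad\ell(\theta^*;d)$ is an empirical average of $\npub$ i.i.d.\ random vectors. Each summand has mean $\grad\calL(\theta^*) = \boldzero$ (because $\theta^*$ is the \emph{unconstrained} population minimizer under Assumption~\ref{ass:12}, so its gradient vanishes) and $\ell_2$-norm at most $1$ by Lemma~\ref{lemma:optlipschitz}. So the statement reduces to a high-probability bound on the norm of an average of bounded, independent, mean-zero vectors.

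First I would control the expectation. Independence and mean-zero give $\Ex{\ltwosq{\grad\Psi(\theta^*)}} = \frac{1}{\npub^2}\sum_{d\in\dpub}\Ex{\ltwosq{\grad\ell(\theta^*;d)}} \le \frac{1}{\npub}$, and hence by Jensen's inequality $\Ex{\ltwo{\grad\Psi(\theta^*)}} \le 1/\sqrt{\npub}$. Then, for concentration about this mean, I would apply McDiarmid's bounded-differences inequality to $f(d_1,\dots,d_{\npub}) := \ltwo{\tfrac{1}{\npub}\sum_i \grad\ell(\theta^*;d_i)}$. Replacing one $d_i$ perturbs the vector inside the norm by something of $\ell_2$-norm at most $2/\npub$ (each gradient has norm $\le 1$), and $\ltwo{\cdot}$ is $1$-Lipschitz, so $f$ has bounded differences with constants $c_i = 2/\npub$. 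McDiarmid then gives $\Pr[f \ge \Ex{f} + t] \le \exp(-t^2\npub/2)$; taking $t = \sqrt{2\log(1/\delta)/\npub}$ makes the right side at most $\delta$. Combining with the expectation bound yields $\ltwo{\grad\Psi(\theta^*)} \le 1/\sqrt{\npub} + \sqrt{2\log(1/\delta)/\npub} = O\!\big(\sqrt{\log(1/\delta)/\npub}\big)$ with probability at least $1-\delta$. (Equivalently, one could invoke a vector/Hilbert-space Bernstein inequality for sums of bounded independent vectors in place of McDiarmid.)

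The main obstacle — really the only subtlety — is obtaining the \emph{logarithmic}-in-$1/\delta$ tail: a naive Markov bound applied to $\Ex{\ltwosq{\grad\Psi(\theta^*)}} \le 1/\npub$ would only give a $1/\sqrt{\delta\,\npub}$ bound, which is too weak for the downstream argument, so one genuinely needs a concentration inequality for the norm of a sum of bounded independent vectors rather than just a second-moment estimate. Finally, since $\theta^* \in \calC_0$ by Assumption~\ref{ass:12}, the norm bound on $\grad\Psi(\theta^*)$ shows $\theta^* \in \calC$ with probability at least $1-\delta$, which is how the lemma is used in the proof of Theorem~\ref{thm:linreg}.
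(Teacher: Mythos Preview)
Your proposal is correct and follows essentially the same approach as the paper: both arguments observe that $\grad\Psi(\theta^*)$ is an average of i.i.d.\ mean-zero vectors (since $\theta^*$ is the unconstrained population minimizer) with $\ell_2$-norm bounded by~$1$ via Lemma~\ref{lemma:optlipschitz}, and then invoke a concentration inequality for such sums. The only cosmetic difference is that the paper cites a vector Azuma inequality directly on the sum, whereas you apply McDiarmid to the scalar function $\ltwo{\cdot}$ (and note the vector-Bernstein alternative); these routes are equivalent here and yield the same $O(\sqrt{\log(1/\delta)/\npub})$ bound.
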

\begin{proof}
Since $\theta^*$ is the population minimizer of $\ell$ in $\mathbb{R}^p$, and $\mathbb{E}_{d \sim \tau}[\ell(\theta; d)]$ is strongly convex, we have $\mathbb{E}_{d \sim \tau}[\grad \ell(\theta^*; d)] = \boldzero$. The lemma now follows from a vector Azuma inequality (see e.g. \cite{hayes03vectorazuma}) applied to the vector sum $\grad \Psi(\theta^*)$, and Lemma~\ref{lemma:optlipschitz}, which gives that $\ltwo{\grad \ell(\theta^*; d)} \leq 1$ for all $d \in supp(\tau)$.
\end{proof}

We can also use the bound on the gradients $\grad \ell(\theta^*; d)$ to show every loss function is Lipschitz within the constraint set.

\begin{lem}\label{lemma:lipschitzbound}
For all $d$, $\ell(\theta; d)$ is $L$-Lipschitz within $\calC_0$ for $L = O(1)$.
\end{lem}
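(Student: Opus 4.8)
The plan is to bound the gradient norm $\ltwo{\grad \ell(\theta;d)}$ uniformly over $\theta \in \calC_0$ and $d = (\bfx,y) \in \mathrm{supp}(\tau)$; since a differentiable function whose gradient is bounded by $L$ on a convex set is $L$-Lipschitz there, and $\calC_0$ is a ball (hence convex), this suffices.

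First I would write out the gradient explicitly: for the squared loss $\ell(\theta;d) = \tfrac12(y - \ip{\theta}{\bfx})^2$ we have $\grad \ell(\theta;d) = -(y - \ip{\theta}{\bfx})\bfx$, so $\ltwo{\grad \ell(\theta;d)} = |y - \ip{\theta}{\bfx}|\cdot\ltwo{\bfx} \le |y - \ip{\theta}{\bfx}|$ using $\ltwo{\bfx}\le 1$ from Assumption~\ref{ass:12}. Then I would split off the population minimizer via the triangle inequality: $|y - \ip{\theta}{\bfx}| \le |y - \ip{\theta^*}{\bfx}| + |\ip{\theta^* - \theta}{\bfx}| \le 1 + \ltwo{\theta^* - \theta}$, where the first term is bounded by $1$ by Assumption~\ref{ass:12} and the second by Cauchy--Schwarz together with $\ltwo{\bfx}\le 1$. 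Finally, since both $\theta$ and $\theta^*$ lie in $\calC_0 = \{\theta : \ltwo{\theta}\le r\}$ with $r = O(1)$, we get $\ltwo{\theta^* - \theta} \le 2r$, hence $\ltwo{\grad \ell(\theta;d)} \le 1 + 2r = O(1)$, and we may take $L = 1 + 2r$.

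Alternatively — and this is perhaps the cleaner route given what precedes — I would invoke Lemma~\ref{lemma:optlipschitz}, which already gives $\ltwo{\grad \ell(\theta^*;d)} \le 1$, and combine it with the fact (used in the proof of Lemma~\ref{lemma:optlipschitz}) that $\ell(\cdot;d)$ is $\ltwosq{\bfx} \le 1$-smooth: then $\ltwo{\grad \ell(\theta;d)} \le \ltwo{\grad\ell(\theta^*;d)} + \ltwosq{\bfx}\cdot\ltwo{\theta - \theta^*} \le 1 + 2r$ for all $\theta \in \calC_0$.

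I do not anticipate any real obstacle here: the statement is a routine consequence of the uniform bounds in Assumption~\ref{ass:12}, the boundedness of $\calC_0$, and the explicit form of the squared-loss gradient. The only minor care needed is to make sure the Lipschitz bound is taken with respect to the $\ell_2$ norm (consistent with the clipping norm $L$ used in Algorithm~\ref{alg:dpmd}) and to note that $\calC_0$ being convex is what lets us pass from a pointwise gradient bound to a global Lipschitz bound on the set.
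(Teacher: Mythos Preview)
Your proposal is correct and essentially matches the paper's proof: the paper also invokes $1$-smoothness of $\ell(\cdot;d)$ together with Lemma~\ref{lemma:optlipschitz} to conclude that $\ell(\theta;d)$ is $L$-Lipschitz on $\calC_0$ with $L = 1 + 2\ltwo{\calC_0} = O(1)$. Your ``alternative'' route is verbatim the paper's argument, and your first route is just the same bound unpacked via the explicit gradient formula.
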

\begin{proof}
Each $\ell(\theta; d)$ is $1$-smooth, we have $\theta^* \in \calC$. In turn, by smoothness and Lemma~\ref{lemma:optlipschitz}, each $\ell(\theta; d)$ is $L$-Lipschitz for $L = 1 + 2\ltwo{\calC_0}$, which is $O(1)$ under Assumption~\ref{ass:12}, giving the lemma.
\end{proof}

We now show that the sample Hessian approximates the population Hessian for both $D_{\sf priv}$ and $D_{\sf pub}$, i.e. the geometry of $\Psi$ matches the population loss' geometry and the private sample loss' geometry.

\begin{lem}\label{lemma:hessianapprox}
Let $\hat{H}_{\sf pub}$ be the Hessian of the public loss function $\Psi$, and $\hat{H}_{\sf priv}$ be the Hessian of the private loss function $\frac{1}{n_{\sf priv}} \sum_{d \in D_{\sf priv}} \ell(\theta; d)$. Then under Assumption~\ref{ass:12} with probability $1 - \delta$, we have 

\[\frac{1}{2} \bar{H} \preccurlyeq \bar{H} - \frac{\lambda_{\min}(\bar{H})}{2} \mathbb{I} \preccurlyeq \hat{H}_{\sf pub} \preccurlyeq \bar{H} + \frac{\lambda_{\min}(\bar{H})}{2} \mathbb{I} \preccurlyeq 2 \bar{H},\]
\[\frac{1}{2} \bar{H} \preccurlyeq \bar{H} - \frac{\lambda_{\min}(\bar{H})}{2} \mathbb{I} \preccurlyeq \hat{H}_{\sf priv} \preccurlyeq \bar{H} + \frac{\lambda_{\min}(\bar{H})}{2} \mathbb{I} \preccurlyeq 2 \bar{H}.\]
\end{lem}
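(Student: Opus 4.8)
The plan is to reduce the displayed spectral sandwiches to a single matrix-concentration estimate and then invoke a standard matrix tail bound. First I would record the structural fact specific to linear regression: for $\ell(\theta;d)=\tfrac12(y-\ip{\theta}{\bfx})^2$ the Hessian is $\nabla_\theta^2\ell(\theta;d)=\bfx\bfx^\top$, which does not depend on $\theta$. Hence $\barH=\mathbb{E}_{d\sim\tau}[\bfx\bfx^\top]$, $\hat H_{\sf pub}=\tfrac1{\npub}\sum_{(\bfx,y)\in\dpub}\bfx\bfx^\top$, and $\hat H_{\sf priv}=\tfrac1{\npriv}\sum_{(\bfx,y)\in\dpriv}\bfx\bfx^\top$ are the population and empirical (uncentered) second-moment matrices of the features; since $\ltwo{\bfx}\le 1$, every summand satisfies $\boldzero\preccurlyeq\bfx\bfx^\top\preccurlyeq\mathbb{I}$, and likewise $\boldzero\preccurlyeq\barH\preccurlyeq\mathbb{I}$.

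Next I would dispose of the two ``outer'' inequalities in each chain, which are purely algebraic: from $\lambda_{\min}(\barH)\mathbb{I}\preccurlyeq\barH$ we get $\tfrac{\lambda_{\min}(\barH)}{2}\mathbb{I}\preccurlyeq\tfrac12\barH$, which rearranges to $\tfrac12\barH\preccurlyeq\barH-\tfrac{\lambda_{\min}(\barH)}{2}\mathbb{I}$ and to $\barH+\tfrac{\lambda_{\min}(\barH)}{2}\mathbb{I}\preccurlyeq\tfrac32\barH\preccurlyeq 2\barH$. So it remains to establish the two ``inner'' inequalities, which together are exactly the operator-norm bounds $\snorm{\hat H_{\sf pub}-\barH}{\mathrm{op}}\le\tfrac{\lambda_{\min}(\barH)}{2}$ and $\snorm{\hat H_{\sf priv}-\barH}{\mathrm{op}}\le\tfrac{\lambda_{\min}(\barH)}{2}$.

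For each of these I would apply a matrix concentration inequality to the i.i.d.\ sum $\sum_i(\bfx_i\bfx_i^\top-\barH)$. Two equivalent routes work. One can run operator-norm matrix Bernstein, using that each centered summand has norm at most $1$ and that the matrix variance is controlled via $\sum_i\mathbb{E}[(\bfx_i\bfx_i^\top-\barH)^2]\preccurlyeq\npub\,\mathbb{E}[\ltwosq{\bfx}\,\bfx\bfx^\top]\preccurlyeq\npub\barH$. Cleaner for matching the hypothesis is to whiten: set $Z_i=\barH^{-1/2}\bfx_i\bfx_i^\top\barH^{-1/2}$, so $\mathbb{E}[Z_i]=\mathbb{I}$ and $\lambda_{\max}(Z_i)=\bfx_i^\top\barH^{-1}\bfx_i\le\ltwosq{\bfx_i}/\lambda_{\min}(\barH)\le 1/\lambda_{\min}(\barH)$; the matrix Chernoff bound for sums of bounded PSD matrices then yields, with probability at least $1-\delta/2$, $\tfrac12\mathbb{I}\preccurlyeq\tfrac1{\npub}\sum_i Z_i\preccurlyeq\tfrac32\mathbb{I}$ once $\npub=\Omega\big(\log(p/\delta)/\lambda_{\min}(\barH)\big)$, which is Assumption~\ref{ass:12}; conjugating by $\barH^{1/2}$ recovers the spectral sandwich for $\hat H_{\sf pub}$. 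Because $\npriv\ge\npub$, the identical argument applies to $\hat H_{\sf priv}$ with probability at least $1-\delta/2$, and a union bound over the two events completes the proof.

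The main obstacle is the bookkeeping in this last step: one must select the concentration inequality and the normalization so that the sample-size requirement it produces lines up with the $\Omega(\log(p/\delta)/\lambda_{\min}(\barH))$ hypothesis (a naive operator-norm Bernstein with the crudest variance bound would over-charge by a condition-number factor), and one must be careful to carry the ambient dimension $p$ correctly inside the logarithm. Everything else — the reduction to operator-norm concentration, the fact that the Hessian is data-dependent but $\theta$-independent, and the trivial ``outer'' inequalities — is routine.
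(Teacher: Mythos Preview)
Your proposal is correct and follows essentially the same route as the paper: dispose of the outer inequalities using $\lambda_{\min}(\barH)\mathbb{I}\preccurlyeq\barH$, then establish the inner inequalities by matrix concentration for the i.i.d.\ sum of per-sample Hessians $\bfx_i\bfx_i^\top$, using the bounds $\boldzero\preccurlyeq\bfx_i\bfx_i^\top\preccurlyeq\mathbb{I}$ and the sample-size hypothesis in Assumption~\ref{ass:12}. The paper's own proof is actually terser than yours---it simply invokes ``a matrix Bernstein inequality'' without specifying the variance control---so your care in noting that the crude variance bound can over-charge, and your whitening/matrix-Chernoff alternative that cleanly yields the $\Omega(\log(p/\delta)/\lambda_{\min}(\barH))$ requirement, are if anything more precise than what the paper writes down.
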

\begin{proof}
The outer inequalities $\frac{1}{2} \bar{H} \preccurlyeq \bar{H} - \frac{\lambda_{\min}(\bar{H})}{2} \mathbb{I}$ and $\bar{H} + \frac{\lambda_{\min}(\bar{H})}{2} \mathbb{I} \preccurlyeq 2 \bar{H}$ follow from the $\lambda_{\min}(\bar{H})$-strong convexity of the population loss, i.e. $\lambda_{\min}(\bar{H}) \mathbb{I} \preccurlyeq \bar{H}$. So it suffices to prove the inner inequalities.

Let $H_d$ be the Hessian of $\ell(\theta; d)$. By 1-smoothness of $H$ and $\lambda_{\min}(\bar{H})$-strong convexity of $\bar{H}$, we have:

\[ \boldzero \preccurlyeq H_d \preccurlyeq \mathbb{I} \qquad \forall d,\]
\[ \boldzero \preccurlyeq \bar{H} \preccurlyeq  \mathbb{I}.\]

And so:

\[ -\mathbb{I} \preccurlyeq H_d - \bar{H} \preccurlyeq \mathbb{I} \qquad \forall d.\]

The inner inequalities now follow from a matrix Bernstein inequality, and the sample complexity lower bounds given in Assumption~\ref{ass:12}.
\end{proof}

We can now prove our main result.

\begin{proof}[Proof of Theorem~\ref{thm:linreg}]
Algorithm~\ref{alg:dpmd} is $(\epsilon, \delta)$-DP by Theorem~\ref{thm:dpmd-privacy}.

For the utility guarantee, with probability at most $3\delta$, one of the high probability events described in Lemmas~\ref{lemma:pointvariance} and~\ref{lemma:lipschitzbound} fails to hold. In this case, by e.g., Lemma~\ref{lemma:lipschitzbound} we can use a naive bound of $O(\ltwo{\calC_0})$ on the loss. Since $\delta$ is negligible, the contribution of this case to the expected excess loss is negligible, so we ignore it here. We now wish to follow the analysis of Theorem A.1 in \cite{talwar2014private}. To do so, we need to calculate various parameters in that theorem statement:

\begin{itemize}
    \item By $\lambda_{\min}(\bar{H})$-strong convexity of $\Psi$, $\ltwo{\calC} =  O(\min\{1,  \frac{\sqrt{\log(1/\delta)}}{\lambda_{\min}(\bar{H}) \sqrt{n_{\sf pub}}}\})$.
    \item We can assume without loss of generality that $\ltwo{\theta} \leq r/2$. This is because if we replace $r$ with $2r$ in the definition of $\calC_0$, the parameters of the problem do not change asymptotically, but this condition is now enforced. Under this assumption, any line passing through $\theta^*$ has an intersection with $\calC_0$ of length $\Omega(1)$. Now, by strong convexity and the definition of $\calC$, this implies $\calC$ is contained within an ellipsoid $\tilde{Q}$ whose axes are the eigenvectors of $\hat{H}_{\sf pub}$, and whose axis lengths are $\Theta(\min\{1, \frac{\sqrt{\log(1/\delta)}}{\lambda_i \sqrt{n_{\sf pub}}}\})$. Furthermore, $\calC$ contains $\tilde{Q}$ rescaled in all dimensions by a constant. This means the symmetric convex hull $Q$ of $\calC$ is also contained in $\tilde{Q}$, and contains $\tilde{Q}$ rescaled by a constant. So the strong convexity of $\Psi$ with respect to the $Q$-norm is within a constant factor of the strong convexity of $\Psi$ with respect to the $\tilde{Q}$-norm.
    
    Now, let $\norm{\cdot}_{\tilde{Q}}$ be the Minkowski $\tilde{Q}$-norm $\norm{\bfx}_{\tilde{Q}} = \min\{a \in \mathbb{R}_{\geq 0}: \bfx \in a\tilde{Q}\}$. 
    In the direction of the $i$th eigenvector, $\Psi$ is $\frac{1}{\lambda_{i}(\hat{H}_{\sf pub})}$-strongly convex with respect to the norm $\norm{\cdot}_{Q'}$ for the set ${Q'} = \{\theta \in \mathbb{R}^p : \ltwo{\grad \Psi(\theta)} \leq 1\}$, so it is $\Theta(\frac{\min\{\lambda_i(\hat{H}_{\sf pub})^2, \log(1/\delta)/n_{\sf pub}\}}{\lambda_i(\hat{H}_{\sf pub})})$-strongly convex with respect to the $\tilde{Q}$-norm, and thus the $Q$-norm, in this direction. So $\Delta$, the strong convexity parameter of $\Psi$ with respect to the $Q$-norm is:
    \[\Delta = \Theta\left(\min_i\left\{\frac{\min\{\lambda_i(\hat{H}_{\sf pub})^2, \log(1/\delta)/n_{\sf pub}\}}{\lambda_i(\hat{H}_{\sf pub})}\right\}\right) = \Theta\left(\min\left\{ \lambda_{\sf \min}(\hat{H}_{\sf pub}), \frac{\log(1/\delta)}{\lambda_{\max}(\hat{H}_{\sf pub}) n_{\sf pub}} \right\}\right)\]
    
    By Lemma~\ref{lemma:hessianapprox}, conditioned on the event in that lemma $\Delta$ is \[\Theta\left(\min\left\{ \lambda_{\sf \min}(\bar{H}), \frac{\log(1/\delta)}{\lambda_{\max}(\bar{H}) n_{\sf pub}} \right\}\right).\]
    \item By a similar argument to the previous item, we get that the squared Gaussian width $G_\calC^2$ is at most $G_{\tilde{Q}}^2$, which is 
    \[O\left(\sum_i \min\left\{1, \frac{\log(1/\delta)}{\lambda_i(\bar{H})^2 n_{\sf pub}}\right\}\right)\]
    \item By Lemma~\ref{lemma:hessianapprox}, conditioned on the event in that lemma, the Hessians of the public sample loss, private sample loss, and population loss are constant-approximations of each other.. From the definition of strong convexity with respect to a function (see Section 2.2 of \cite{talwar2014private}), any quadratic function is 1-strongly convex with respect to itself, and in turn 
    $\Theta(1)$-strongly convex with respect to another quadratic function whose Hessian is within a constant factor of its own, since this implies the Bregman divergences induced by the two functions are also within a constant factor. So the sample private loss $\frac{1}{n_{\sf priv}} \sum_{d \in D_{\sf priv}} \ell(\theta; d)$ is $\Theta(1)$-strongly convex with respect to $\Psi$. 
\end{itemize}

We will view Algorithm 1 as equivalently using $\Psi' = \frac{1}{\Delta} \Psi$ in place of $\Psi$, and $\eta_t' = \Delta \eta_t$ in place of $\eta_t$. $\Psi'$ is 1-strongly convex with respect to the $Q$-norm, and the sample private loss is now $\Theta(\Delta)$-strongly convex with respect to $\Psi'$.
Now, following the proof of Theorem A.1 in~\cite{talwar2014private}, setting $\eta_t' = 1/\Delta t$ and $T = \frac{\ltwosq{\calC}(\epsilon n_{\sf priv})^2}{\ltwosq{\calC} + G_\calC^2}$, conditioned on the high probability events we get an excess empirical loss bound of:

\[\Tilde{O}\left(\frac{\log(1/\delta) \max\{\frac{1}{\lambda_{\min}(\bar{H})}, \lambda_{\max}(\bar{H}) \npub\} \cdot \sum_i \min\left\{1, \frac{\log(1/\delta)}{\lambda_i(\bar{H})^2 n_{\sf pub}}\right\}}{\epsilon^2 n_{\sf priv}^2} \right).\]

For an excess population loss bound, we need to show uniform stability. Note that since the Hessian of $\Psi$, $\bar{H}_{\sf pub}$, is fixed everywhere then PDA-DPMD just applies $\bar{H}_{\sf pub}^{-1} \preccurlyeq O(1/\lambda_{\min}(\bar{H})) \cdot \mathbb{I}_p$ to the noisy gradients. This implies that each step of PDA-DPMD is contractive, and thus that the uniform stability parameter of PDA-DPMD is $O(1/\lambda_{\min}(\bar{H}))$ times that of DP-SGD using the same settings of $\eta_t, T$. The uniform stability of DP-SGD on a convex $L$-Lipschitz loss is $O(\frac{L^2 \sum_t \eta_t}{n})$ (see e.g. Appendix A of \cite{bassily2019private} for a proof). Plugging in the parameters for our setting, this is $O(\log (\epsilon n_{\sf priv}) n_{\sf priv})$, so PDA-DPMD has uniform stability parameter $O(\log (\epsilon n_{\sf priv})/(\lambda_{\min}(\bar{H}) n_{\sf priv}))$. The expected excess population loss is at most the uniform stability parameter plus the expected excess empirical loss, giving the theorem.
\end{proof}

\mypar{Local Stability Properties}
Since in linear regression the public loss function has the same Hessian $\hat{H}_{\sf pub}$ everywhere, mirror descent effectively is DP-SGD, but applying $\hat{H}_{\sf pub}^{-1}$ to the noisy gradient. This allows us to readily characterize the effective noise being added, and show that the noise causes each iterate $\theta_t$ to be moved by an amount proportional to $1/\lambda_\bfv$ in a direction where the strong convexity parameter is $\lambda_\bfv$:

\begin{thm}\label{thm:locstab-linreg}
Let the Hessian of $\Psi$ be $\hat{H}_{\sf pub} = \sum_i \lambda_i \bfv_i \bfv_i^\top$, where $\bfv_i$ are the unit eigenvectors of $\hat{H}_{\sf pub}$. Fix an iteration $t$ as well as starting point $\theta_t$ and private gradient $\bfg_t$ in PDA-DPMD. Let $\bar{\theta}$ be the value of $\theta_{t+1}$ after performing the mirror descent update with $\bfb_t = \boldzero$ at iteration $t$, and let 
where $\hat{\theta}$ be the value of the next iterate $\theta_{t+1}$ if noise is added. Then for any (unit) direction $\bfv = \sum\limits_i a_i \bfv_i$,

{\small \[\Ex{|\langle \hat{\theta} - \bar{\theta}, \bfv \rangle|} = \eta \sigma  \sqrt{\frac{2}{\pi} \cdot \sum\limits_i \left(\frac{a_i}{\lambda_i}\right)^2}.\]}
\end{thm}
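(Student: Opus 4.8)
The plan is to exploit the fact that in linear regression the mirror map $\Psi$ is a fixed quadratic, so $\grad\Psi$ is an affine map with constant derivative $\hat H_{\sf pub}$, and the mirror-descent update becomes linear in the noise vector $\bfb_t$. Concretely, from the reformulation of Line~\ref{line:mdupdate} noted right after Algorithm~\ref{alg:dpmd}, the unprojected next iterate $\theta_{t+1/2}$ satisfies $\grad\Psi(\theta_{t+1/2}) = \grad\Psi(\theta_t) - \eta_t(\bfg_t+\bfb_t)$, hence $\theta_{t+1/2} = \theta_t - \eta_t \hat H_{\sf pub}^{-1}(\bfg_t+\bfb_t)$ since $\hat H_{\sf pub}$ is constant. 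First I would argue that, within the relevant region, the Bregman projection onto $\calC$ does not interfere with this linear structure in the direction we care about — either by assuming (as the ``local'' framing of the theorem suggests) that the relevant iterate stays in the interior of $\calC$, so the projection is the identity, or by noting that the theorem is comparing the noised iterate $\hat\theta$ against the noiseless iterate $\bar\theta$ and both are obtained by the same projection of points differing by $-\eta_t \hat H_{\sf pub}^{-1}\bfb_t$. I would present the clean statement under the interior assumption, which gives exactly $\hat\theta - \bar\theta = -\eta_t \hat H_{\sf pub}^{-1}\bfb_t$.

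The second step is pure computation with Gaussians. Writing $\hat H_{\sf pub}^{-1} = \sum_i \lambda_i^{-1}\bfv_i\bfv_i^\top$ and $\bfb_t\sim\calN(0,\sigma^2\bfI_p)$, for a fixed unit vector $\bfv = \sum_i a_i\bfv_i$ we get
\[
\langle \hat\theta-\bar\theta,\bfv\rangle = -\eta_t\,\langle \hat H_{\sf pub}^{-1}\bfb_t,\bfv\rangle = -\eta_t\sum_i \frac{a_i}{\lambda_i}\langle\bfb_t,\bfv_i\rangle.
\]
Since the $\bfv_i$ are orthonormal, the scalars $\langle\bfb_t,\bfv_i\rangle$ are i.i.d.\ $\calN(0,\sigma^2)$, so $\langle\hat\theta-\bar\theta,\bfv\rangle$ is a centered Gaussian with variance $\eta_t^2\sigma^2\sum_i (a_i/\lambda_i)^2$. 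The expected absolute value of a $\calN(0,s^2)$ random variable is $s\sqrt{2/\pi}$, which yields
\[
\Ex{|\langle \hat\theta-\bar\theta,\bfv\rangle|} = \eta_t\sigma\sqrt{\frac{2}{\pi}\sum_i\left(\frac{a_i}{\lambda_i}\right)^2},
\]
matching the claimed formula (with $\eta=\eta_t$). Specializing $\bfv=\bfv_j$ gives displacement $\eta_t\sigma\sqrt{2/\pi}/\lambda_j$, i.e.\ inversely proportional to the curvature $\lambda_j=\lambda_\bfv$, which is the interpretive punchline.

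I expect the only genuine subtlety to be handling the Bregman projection onto $\calC$: the identity $\hat\theta-\bar\theta = -\eta_t\hat H_{\sf pub}^{-1}\bfb_t$ is exact only when neither iterate is clipped by $\calC$, and in general projection onto a convex set is a contraction in the $\Psi$-induced (i.e.\ $\hat H_{\sf pub}$-weighted) norm rather than an isometry, so the stated equality should be read as holding in the local/interior regime (consistent with the theorem's ``local stability'' title and the bounded-region hypothesis used elsewhere in the section). Everything else is a one-line Gaussian moment calculation, so the main work is just stating the interior-of-$\calC$ condition cleanly and invoking the affine form of $\grad\Psi$.
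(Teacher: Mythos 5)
Your proposal is correct and takes essentially the same route as the paper's proof: exploit the constant Hessian of $\Psi$ to write the noise-induced displacement as $\hat\theta-\bar\theta=-\eta\,\hat H_{\sf pub}^{-1}\bfb_t$, then finish with the standard $\sqrt{2/\pi}$ absolute-moment computation for a centered Gaussian. The only difference is that you explicitly flag the Bregman-projection/interior-of-$\calC$ caveat, which the paper's proof handles implicitly by working with the unprojected mirror-descent update.
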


In contrast, for DP-SGD, $\Ex{|\langle \hat{\theta} - \bar{\theta}, \bfv \rangle|} = \eta \sigma  \sqrt{\frac{2}{\pi}}$ for all $\bfv$. 

\begin{proof}
Let $\bfb_t$ be the noise added for privacy. Without noise, mirror descent would set $\theta^*$ to be such that:

\[-\eta \bfg_t = \nabla \Psi(\theta^*) - \nabla \Psi(\theta_t).\]

Similarly, given the noisy gradient $\bfg_t + \bfb_t$, mirror descent would set $\hat{\theta}$ to be such that:
\[-\eta (\bfg_t + \bfb_t) = \nabla \Psi(\hat{\theta}) - \nabla \Psi(\theta_t).\]

We then have:

\[ -\eta \bfb_t = \nabla \Psi(\hat{\theta}) - \nabla \Psi(\theta^*).\]

In turn, recalling that $\Psi$ has a fixed Hessian we have:

\[ \hat{\theta} - \theta^* = - \eta \bar{H}_{\sf pub}^{-1} \bfb_t \]

We can now directly prove the theorem:

\[\Ex{|\langle \hat{\theta} - \theta^*, \bfv \rangle|
} = \eta \Ex{ |\langle\bar{H}_{\sf pub}^{-1} \bfb_t, \bfv \rangle|}\]
\[= \eta \Ex{ |\langle(\sum_i \frac{1}{\lambda_i} \bfv_i \bfv_i^\top) \bfb_t, \bfv\rangle| } = \eta \Ex{ |\sum_i \frac{a_i}{\lambda_i}\langle \bfb_t, \bfv_i \rangle |}\]
\[= \eta \Ex{| \sum_i N(0, (a_i / \lambda_i)^2)|} = \eta \Ex{|  N(0, \sum_i (a_i / \lambda_i)^2)|} =\sqrt{\frac{2}{\pi}} \cdot \eta \sigma \sqrt{\sum_i \left(\frac{a_i}{\lambda_i}\right)^2}.\]
\end{proof}

\mypar{Simulation Results} To corroborate our theoretical results with empirical validation, we run PDA-DPMD on synthetic data for the linear regression problem with mean squared error loss.
We vary the dimensionality of the problem $p$ from 500 to 6000. For each $p$, we generate 10,000 private samples and $1.5p$ public samples.
The optimal $\theta^*$ is sampled from $\calN(0,\ind)$. 
To introduce a non-isotropic geometry, we sample the feature vector $\bfx_i$ such that 40 of the first $p/5$ features and 80 of the last $4p/5$ features, chosen uniformly at random, are set to $0.05$, and the rest of the features are set to 0. 
In this way, the expected $\ell_2$-norm of each feature vector (and in turn each gradient) is $O(1)$, and thus the effects of clipping should not vary with $p$. 
The predicted variable $y_i$ is sampled from $\calN(\theta^* \cdot \bfx_i, 0.01)$ so that the population mean squared error loss is always 0.01, i.e. independent of dimension.
We set $\epsilon = 1$, $\delta = 10^{-5}$. 

We consider three algorithms: (i) DP-SGD with a ``cold start'', i.e. using a random initialization, (ii) DP-SGD with a ``warm start'' on the model pre-trained with public data, and (iii) PDA-DPMD after pre-training on public data. 
We perform a grid search over the learning rate, clipping norm, and number of epochs used and report the best empirical loss. We perform 20 trials for each algorithm and dimension.

Note that the exact optimum on the public data can be computed exactly as $\theta_{\sf pub}^* = (\bfX^\top \bfX)^{-1} \bfX^\top \bfy$. 
The mirror descent step can also be solved exactly by applying the inverse of the Hessian $\bfX^\top \bfX$ to the gradient, since the Hessian is the same everywhere. 
For numerical stability, we add a small constant times the identity matrix to the Hessian before computing its inverse. 
We also normalize the Hessian of the loss function so its inverse (which is applied to the gradient before taking a step in PDA-DPMD) has maximum eigenvalue of one. This ensures that if the Hessian were a multiple of the identity matrix, DP-SGD and PDA-DPMD would behave exactly the same for the same hyperparameter choice.

Figure~\ref{fig:csvsws} shows the empirical loss of cold- and warm-start DP-SGD.
Our results show that pre-training with a number of public samples linear in the dimension allows DP-SGD to achieve nearly dimension-independent error. 
Figure~\ref{fig:wsvsmd} compares warm-start DP-SGD and PDA-DPMD. The loss of PDA-DPMD is never worse than that of warm-start DP-SGD, and can be substantially lower for smaller dimensions. 
We observed that the ratio of the maximum and minimum eigenvalues of the Hessian $\bfX^\top \bfX$ decreases as $p$ increases, which means that the Hessian has poorly-concentrated eigenvalues at small $p$ but gets closer to the identity matrix as $p$ increases. 
Since PDA-DPMD recovers warm start DP-SGD when the Hessian is the identity, we can expect that PDA-DPMD obtains less of an advantage over DP-SGD as the Hessian gets closer to the identity. 

\section{Empirical Evaluation}
\label{sec:empEval}

\begin{figure}
\centering
\begin{minipage}{.49\textwidth}
\subcaptionbox{Cold start DP-SGD vs. warm start DP-SGD.\label{fig:csvsws}}
{\includegraphics[width=.49\textwidth]{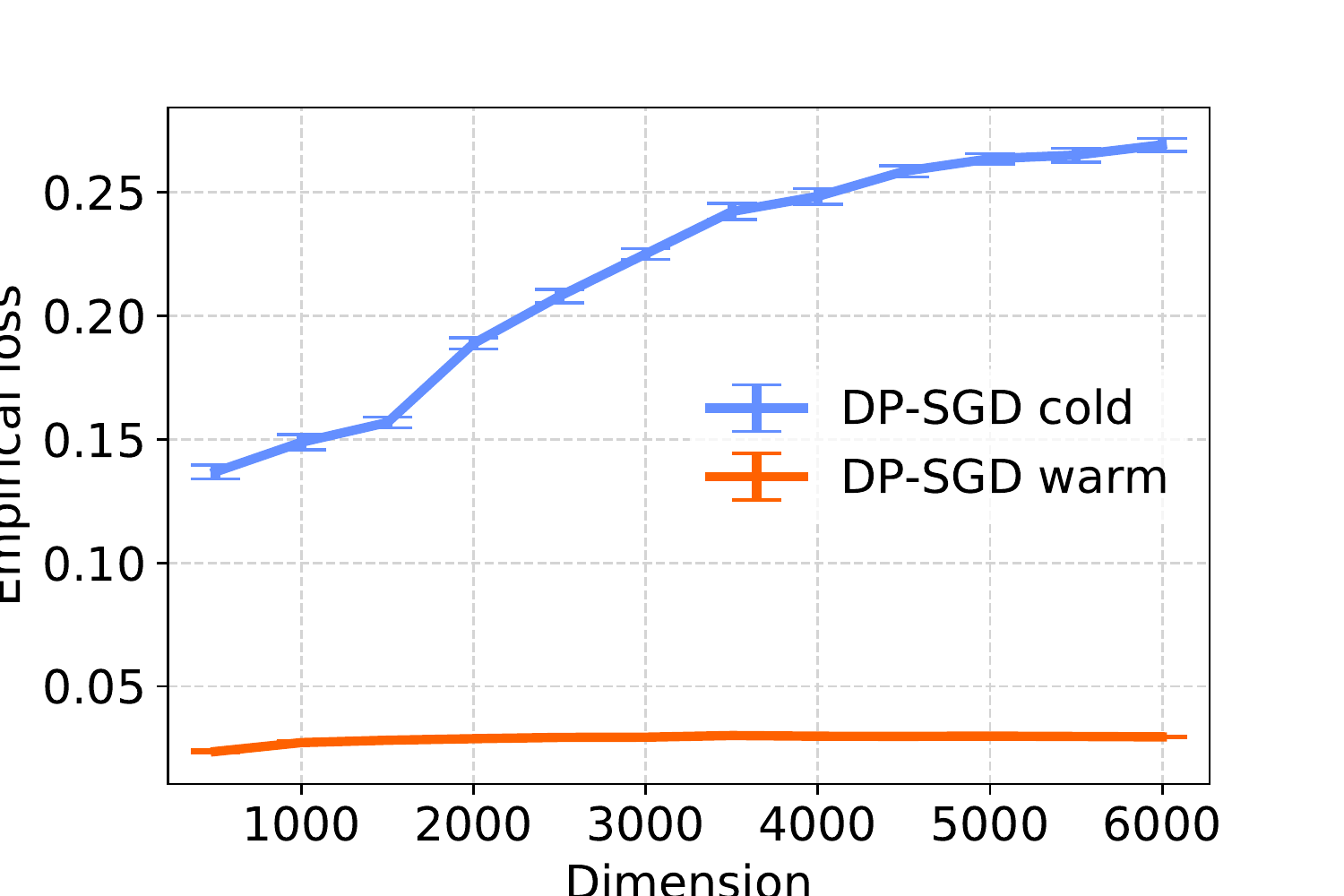}}
\subcaptionbox{Warm start DP-SGD vs. PDA-DPMD.\label{fig:wsvsmd}}
{\includegraphics[width=.49\textwidth]{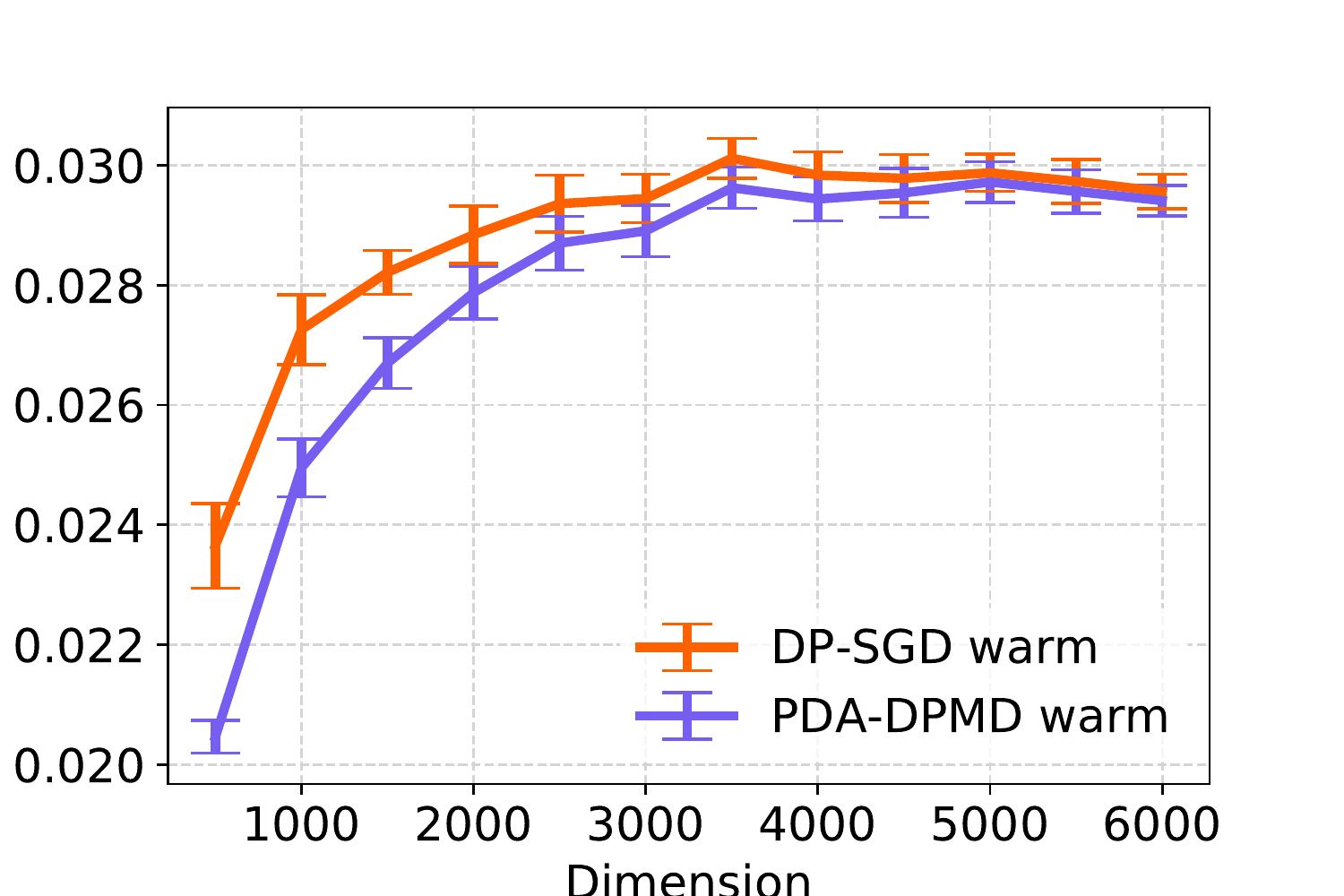}}
\caption{The empirical loss on synthetic linear regression data. 
The mean and error bars for a 95\% confidence interval over 20 
runs are plotted.
The optimal population loss is $0.01$.}
\label{fig:syntheticlinreg}
\end{minipage}
\end{figure}

\subsection{First-order Approximation to Mirror Descent} 
In practice, the Mirror Descent (MD) step in Line~\ref{line:mdupdate} of Algorithm~\ref{alg:dpmd} can be computationally expensive. 
For settings where (i) the problem is \emph{unconstrained}, i.e., $\calC=\mathbb{R}^p$ and (ii) the public loss function $\Psi(\theta)$ may not be strongly convex with respect to the $\ell_2$-norm, we can instead use the following more efficient approximation:
\begin{equation}
\theta_{t+1}\leftarrow \theta_t - \eta_t \left(\alpha_t (\bfg_t + \bfb_t) + (1-\alpha_t) \nabla \Psi(\theta_t) \right),
\label{eq:simpmd}
\end{equation}
where $\eta_t$ is the learning rate, and $\alpha_t \in [0, 1]$ balances the weight of private and public gradient.
The derivation of this formula is in Appendix~\ref{app:first order approximation}.
Notice that $\alpha_t=1$ corresponds to DP-SGD on private data only. 
In our experiment, we decrease $\alpha_t$ with a cosine schedule, i.e. $\alpha_t = \cos(\pi t/ (2 \alphaT))$ where $\alphaT$ is a hyperparameter that controls how fast $\alpha_t$ decays. 
In practice, instead of computing $\bfg_t$ and $\nabla \Psi(\theta_t)$ using all the private and public data, we can estimate them with stochastic gradients. 

\subsection{Evaluation with User-level Differential Privacy}
\label{sec:empEvalFederated}
Now, we evaluate our technique (Algorithm~\ref{alg:dpmd}) with the update step in \eqref{eq:simpmd} on settings suited for user-level DP guarantees.
In particular, we conduct our evaluation for next word prediction on the benchmark StackOverflow~\cite{so_data} dataset, which consists of 342.4k users having 135.8M examples in the training set, and 204.1k users having 16.6M examples in the test set.
Since StackOverflow is naturally
keyed by users, we assume training in a federated learning~\cite{FL1} setting. 
For private users, we limit each user to have at most 256 examples.
For each user having more than 256 examples in StackOverflow, we create a private user with 256 examples and a new user with at most 256 of the overflowing examples. We randomly assign 100 such newly created users (0.03\% of the total number of users) as public users. 

We first pre-train on the public users, then use Algorithm~\ref{alg:dpmd} with the update step in~\eqref{eq:simpmd} (with the updates modeled on the DP Federated Averaging optimizer~\cite{dplangmodels} instead of DP-SGD). The privacy guarantee is thus user-level.
We compare  with two baselines: ``cold start" DP-FedAvg, which uses the private data only, and ``warm start" DP-FedAvg, which pre-trains the model with public data and then fine-tunes with the private data using DP-FedAvg. 
We use the one layer LSTM from~\cite{kairouz2021practical} for our experiments on StackOverflow. See Appendix~\ref{app:exp archs} for more details.

For pre-processing, we use the 10k most frequently used words, and represent all other words with an unknown token. We set the maximum length of any sequence to 20, and each client has a maximum of 256 sentences. 
In each training round, we sample 100 clients, and each client uses a batch size of 16 for local training. We train for 1.6k rounds. See Appendix~\ref{app:exp archs} for more details. We evaluate utility using accuracy on in-vocabulary words and perplexity.

\begin{figure}[h!]
\centering
\begin{subfigure}[b]{0.49\textwidth}
\centering
\includegraphics[width=\textwidth]{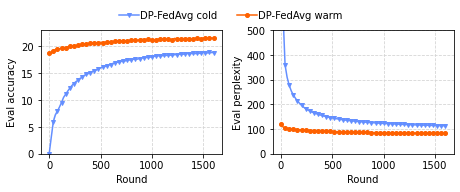}
\caption{DP-FedAvg cold start compared to DP-FedAvg warm start.}
\end{subfigure}
\hfill
\begin{subfigure}[b]{0.49\textwidth}
\centering
\includegraphics[width=\textwidth]{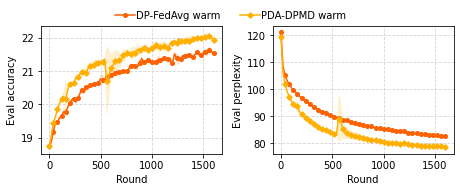}
\caption{DP-FedAvg warm start and PDA-DPMD warm start.}
\end{subfigure}
\caption{Stackoverflow. 0.03\% public data. Eval accuracy / perplexity vs. training rounds. Averaged over 3 runs.}
\label{fig:stackoverflow}
\end{figure}

\begin{table}[!htb]
\centering
\begin{tabular}{l l l}
\toprule
Algorithm& Accuracy & Perplexity \\
\midrule
DP-FedAvg cold   &18.52 $\pm$ 0.04     & 116.21 $\pm$ 0.08 \\
DP-FedAvg warm   &21.2 $\pm$ 0.03       & 85.03 $\pm$ 0.06 \\
PDA-DPMD warm     & \textbf{21.77 $\pm$ 0.02}       & \textbf{80.51 $\pm$ 0.2} \\
\bottomrule
\end{tabular}
\caption{Mean and standard deviation of the Test accuracy and perplexity for trained models on StackOverflow.}
\label{table:stackoverflow}
\end{table}

We compare warm start PDA-DPMD to warm start DP-FedAvg. We fix the noise multiplier to $\sigma = 0.4$ so that the client-level $\varepsilon= 8.32$ for $\delta = 10^{-6}$. We perform a grid search over the server learning rate $\{0.5, 1.0, 3.0\}$, the client learning rate $\{0.1, 0.2, 0.5\}$, and clipping norm $\{0.3, 1.0\}$ for both methods. We perform an additional search for the optimal decay schedule for $\alpha$ for PDA-DPMD using $\cos(\pi t / (iT))$ for $i \in \{2, 3, 4, 5, 8\}$, where $T = 1600$.  
In Figure~\ref{fig:stackoverflow}, we plot the eval accuracy / perplexity across training for $\alpha_t = cos(\pi t / (5T))$. 
In Table~\ref{table:stackoverflow}, we present the test accuracy and perplexity for the final trained models. We see that PDA-DPMD obtains a 0.57\% absolute (2.7\% relative) increase in accuracy, and a 5.3\% drop in perplexity compared to warm start DP-FedAvg.
\emph{We demonstrate an increased benefit from incorporating the public data in training over and above just pre-training}, which to our knowledge, has not been achieved in prior work.

\mypar{Remark} A follow-up work~\cite{li2022private}  uses in-distribution public data for improving the privacy-utility trade-offs, by using it to incorporate adaptivity in DP optimizers. 
Their empirical results for  StackOverflow are incomparable to ours because they consider a subsampled version of StackOverflow with a total of 400 users, whereas we conduct our experiments on the complete StackOverflow dataset with $342.4K$ users.

\subsection{Evaluation with Sample-level Differential Privacy}
\label{sec:empEvalCentral}
Next, we demonstrate the efficacy of our technique with the update step in~\eqref{eq:simpmd} on two real world tasks across three benchmark datasets: next word prediction on WikiText-2~\citep{WikiText}, and image classification on CIFAR-10~\citep{cifar10} and EMNIST (ByMerge split)~\citep{cohen_afshar_tapson_schaik_2017}. 
For each dataset, we randomly assign 4\% of the original training data as public, and the rest as private. We do not consider a larger amount of in-distribution public data as that could make the problem trivial. 

As in Section~\ref{sec:empEvalFederated}, we first pre-train on the public data, then use Algorithm~\ref{alg:dpmd} with update rule ~\eqref{eq:simpmd}.
We compare our algorithm with two baselines (analogous to those in  Section~\ref{sec:empEvalFederated}): ``cold start" DP-SGD, and ``warm start" DP-SGD.
In this setting too, we see benefits from the public data above just pre-training.
For WikiText-2, we use an LSTM model from~\cite{asi2021private}.
For CIFAR-10 and EMNIST, we use network architectures considered in prior works~\citep{papernot2020tempered,kairouz2021practical}.  See Appendix~\ref{app:exp archs} for more details.

\mypar{Empirical Evaluation on WikiText-2}
Our setup mainly follows~\citet{asi2021private}.
As a preprocessing step, we take the top 8k most frequent words and convert the rest into a special token representing unknown word. The dataset is then split into 48.8k length-35 sequences, and we consider sequence-level privacy here.
After pre-training, we fine-tune the model with batch size 250 for 20 epochs. 
We search for optimal $\alphaT$ in $\{100, 200, 500\}$.
For two different privacy levels,   $\epsilon=15.7$ and $1.71$ at $\delta=10^{-5}$ (corresponding to $\sigma=0.5$ and $1.08$, respectively), Figure~\ref{fig:wiki 4} shows the test loss for PDA-DPMD with $\alphaT = 100$ and $50$ respectively, and for the two baselines. 
From Table~\ref{tab:res}, we see that PDA-DPMD obtains the smallest test loss for both the privacy levels.
Also, comparing the two DP-SGD baselines, we can see that using public data for pre-training provide trained models with higher utility.

\begin{figure}
    \centering
    \begin{minipage}{0.49\textwidth}
\subcaptionbox{$\sigma=0.5$. $\epsilon=15.7$.}
{\includegraphics[width=.45\textwidth]{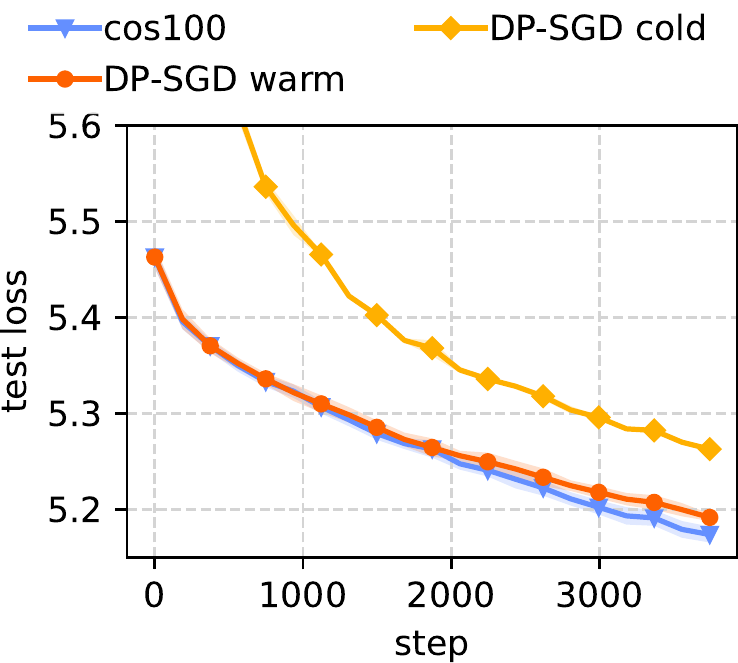}}
\hfill
\subcaptionbox{$\sigma=1.08$. $\epsilon=1.71$.}
{\includegraphics[width=.45\textwidth]{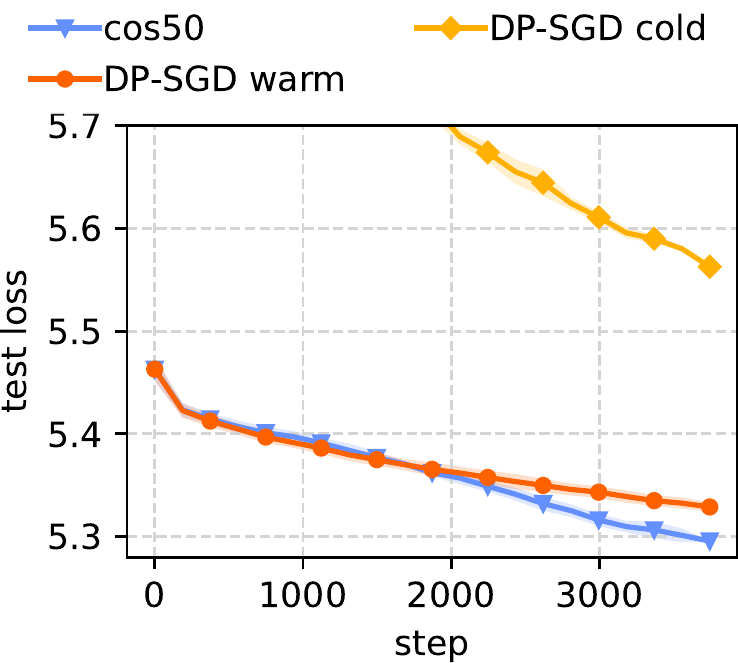}}
\caption{WikiText-2. 4\% public data. Validation and test loss. Averaged over $3$ runs.
Full plots in Appendix~\ref{app:full plot}.}
\label{fig:wiki 4}
\end{minipage}
\end{figure}

\begin{table}
\scriptsize
\centering
\caption{Metrics for the final models for each configuration.}
\label{tab:res}
\begin{tabular}{c l l l}
\toprule
Dataset, metrics& Algorithm&Smaller $\sigma$ &Larger $\sigma$ \\
\midrule
\multirow{3}{*}{\makecell{WikiText-2,\\test loss}}
&DP-SGD cold    &5.2626    &5.5627 \\
&DP-SGD warm    &5.1914    &5.3288 \\
&PDA-DPMD       &\textbf{5.1736}    &\textbf{5.2956} \\
\midrule
\multirow{3}{*}{\makecell{CIFAR-10,\\accuracy / test loss }}
&DP-SGD cold    &62.9633 / 1.4225   &40.6000 / 1.6890 \\
&DP-SGD warm    &66.3933 / 1.2371   &53.4100 / 1.3462 \\
&PDA-DPMD       &\textbf{67.0300 / 1.1435}   &\textbf{55.3950 / 1.2785} \\
\midrule
\multirow{3}{*}{\makecell{EMNIST,\\accuracy / test loss}}
&DP-SGD cold    &87.5671 / 0.5422   &84.7270 / 0.6170 \\
&DP-SGD warm    &87.8534 / 0.5089   &86.3352 / 0.5586 \\
&PDA-DPMD       &\textbf{87.9860 / 0.4706}   &\textbf{86.7229 / 0.4982} \\
\bottomrule
\end{tabular}
\end{table}

Though our work's focus is on in-distribution public data, we additionally compare with SoTA~\citep{asi2021private} which uses WikiText-103~\citep{WikiText} as the public data.
In that setting\footnote{The implementation of \citet{asi2021private} was not made public at the time of this submission, even after contacting the authors.
 We make our best effort to match their experiment setup. Since the algorithms can be sensitive to hyperparameter choices, for a fair comparison we only use  their quoted results.},
our warm start DP-SGD baseline is better than the proposed SoTA in~\citep{asi2021private} by 1.1\% for $\epsilon=1.0$, and 6.6\% for $\epsilon=3.0$ in terms of test perplexity.

\mypar{Empirical Evaluation on CIFAR-10}
CIFAR-10 consists of 50k training images and 10k test images from 10 classes.
After pre-training, we fine-tune the model with batch size 500 for 100 epochs.
We search for optimal $\alphaT$ in $\{200, 500, 1000, 2000, 5000\}$.
In Figure~\ref{fig:cifar10 4}, for two different privacy levels, $\epsilon=3.51$ and $0.19$ at $\delta=10^{-5}$ (corresponding to $\sigma=1.51$ and $20.0$, respectively), we report the test loss and accuracy for $\alphaT = 2000$, and for the two baselines.
From Table~\ref{tab:res}, we see that PDA-DPMD provides the best accuracy (even if by a small margin over the warm started DP-SGD baseline). Moreover, PDA-DPMD also results in significantly lower test loss compared to both the baselines for both privacy levels.

\begin{figure}[h!]
\centering
\begin{subfigure}[b]{0.49\textwidth}
\centering
\includegraphics[width=\textwidth]{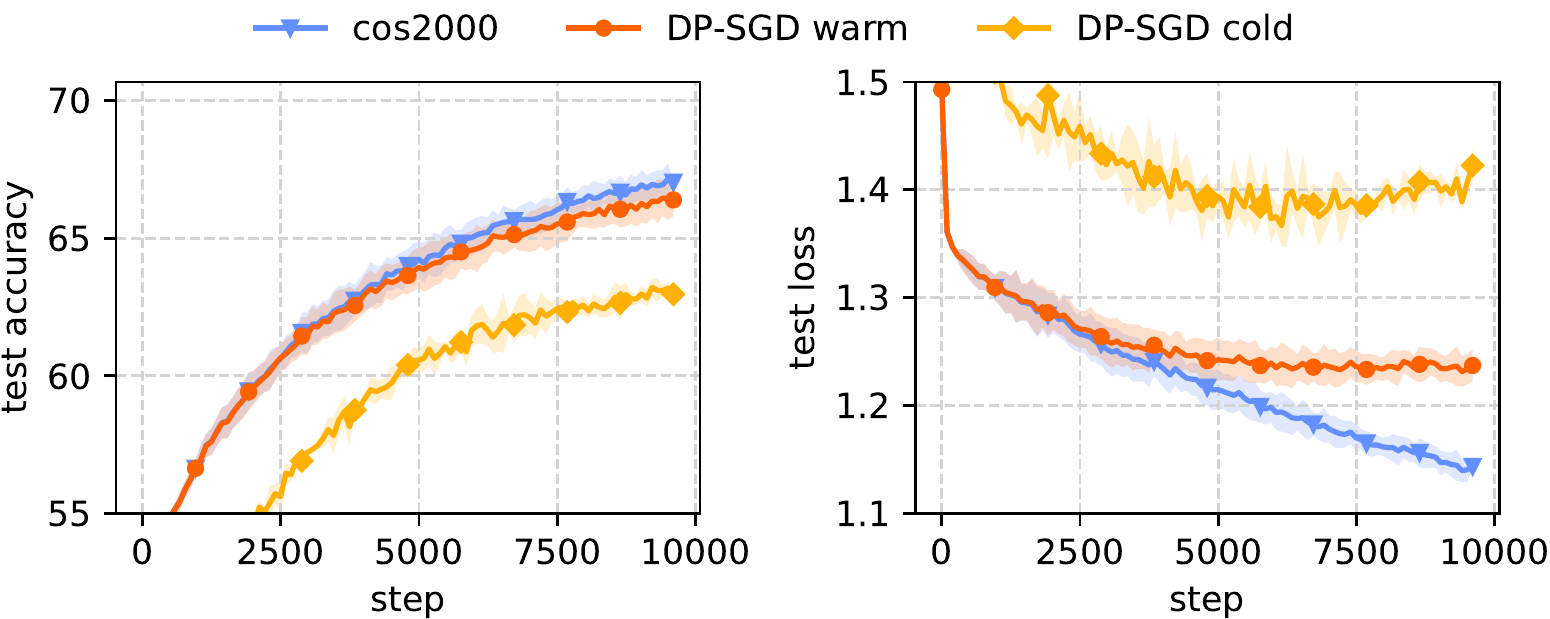}
\caption{$\sigma=1.51$. $\epsilon=3.51$.}
\end{subfigure}
\hfill
\begin{subfigure}[b]{0.49\textwidth}
\centering
\includegraphics[width=\textwidth]{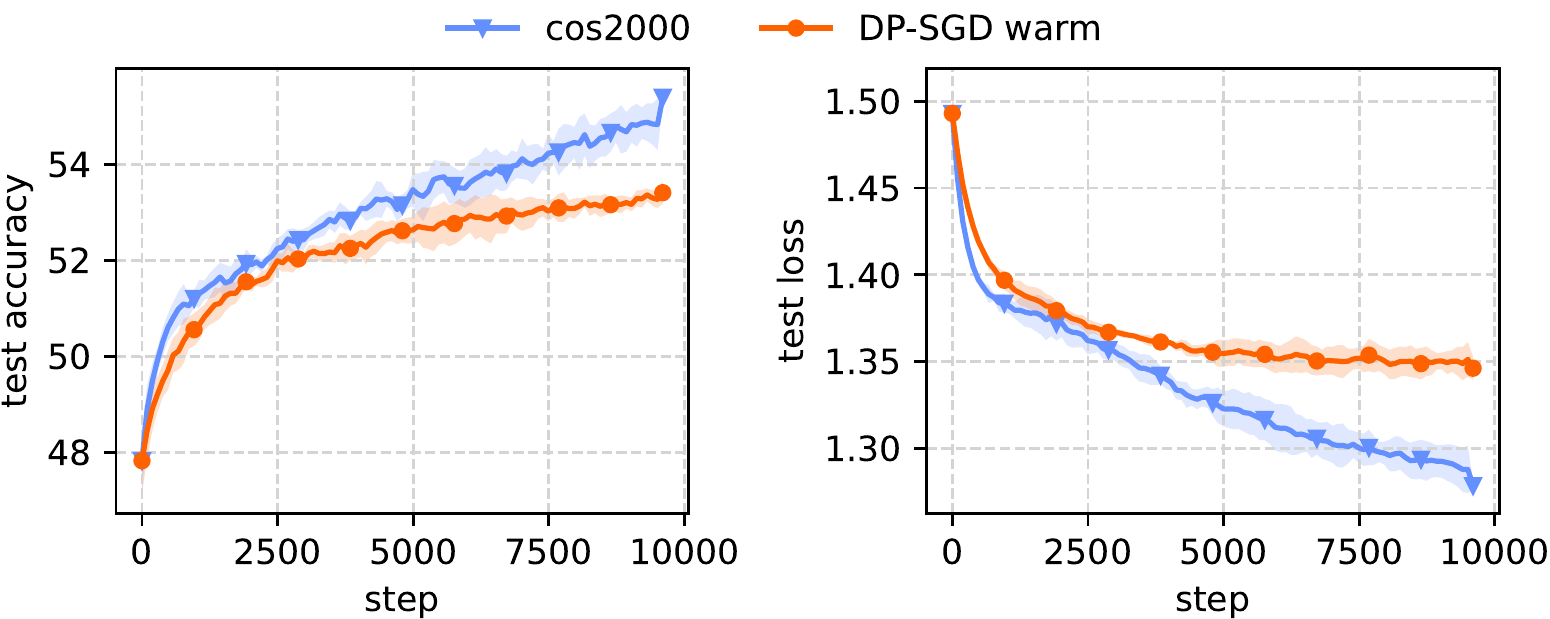}
\caption{$\sigma=20.0$. $\epsilon=0.19$. Full plot in Appendix~\ref{app:full plot}.}
\end{subfigure}
\caption{CIFAR-10. 4\% public data. Test accuracy / loss vs. training steps. Averaged over $3$ runs.}
\label{fig:cifar10 4}
\end{figure}

\mypar{Empirical Evaluation on EMNIST}
EMNIST (ByMerge split) consists of 697.9K training images and 116.3k test images from 47 classes.
After pre-training, we fine-tune
with batch size 500 for 50 epochs. 
We search for optimal $\alphaT$ in $\{200, 500, 1000, 2000, 5000\}$.
In Figure~\ref{fig:emnist 4} and Table~\ref{tab:res}, for $\sigma=0.41$ and $1.89$, corresponding to privacy $\epsilon=25.80$ and $0.48$ at $\delta=10^{-6}$, we report the test loss and accuracy for $\alphaT = 500$, and for the two baselines. 
We see a similar trend as with CIFAR-10.

\begin{figure}[h!]
\centering
\begin{subfigure}[b]{0.49\textwidth}
\centering
\includegraphics[width=\textwidth]{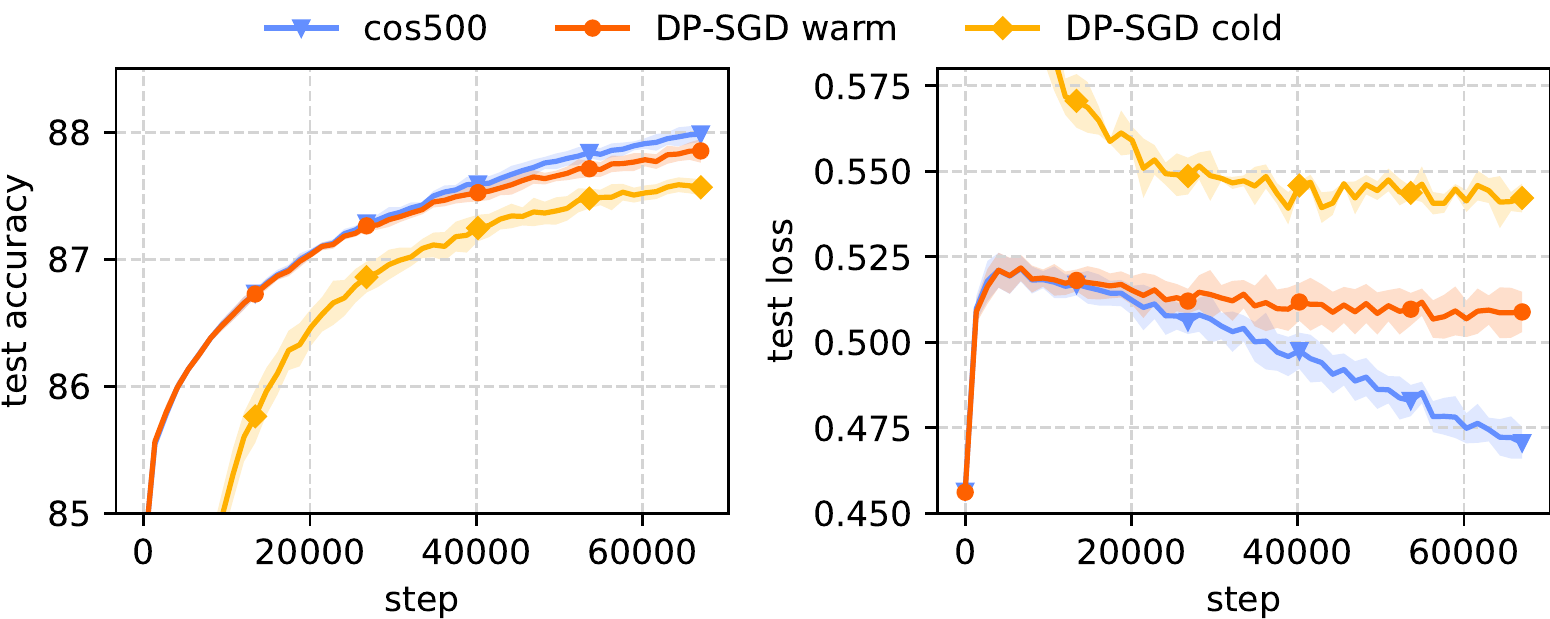}
\caption{$\sigma=0.41$. $\epsilon=25.80$.}
\end{subfigure}
\hfill
\begin{subfigure}[b]{0.49\textwidth}
\centering
\includegraphics[width=\textwidth]{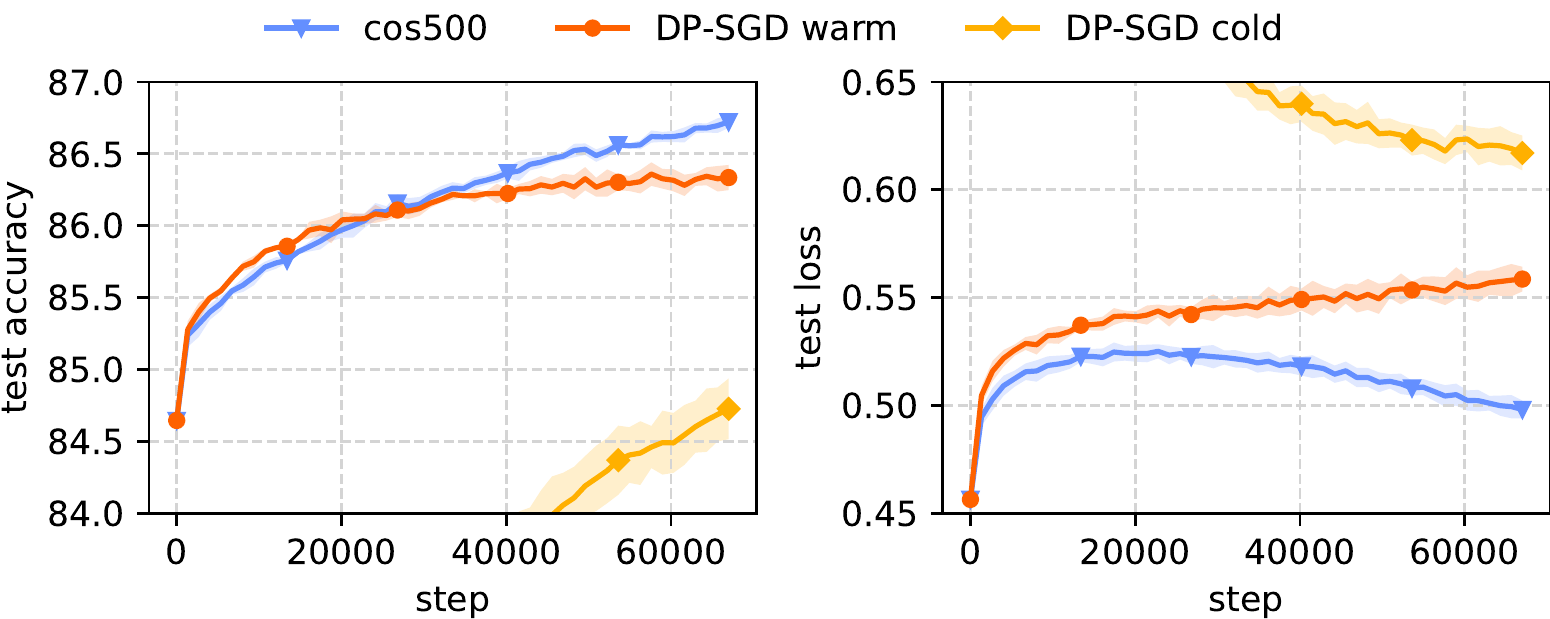}
\caption{$\sigma=1.89$. $\epsilon=0.48$.}
\end{subfigure}
\caption{EMNIST. 4\% public data. Test accuracy / loss vs. training steps. Averaged over $3$ runs.}
\label{fig:emnist 4}
\end{figure}
\bibliographystyle{icml2022}
\bibliography{reference}
\appendix
\section{Notation Reference}\label{sec:notation}

We recall here the notation commonly used used throughout the paper:

\begin{table}[h]
\begin{tabular}{|c|c|}
\hline
$\alpha$ & A weighting of public and private gradients \\
\hline
$\bfb$ & Noise added to gradients for privacy \\
\hline
$B_\Psi$ & The Bregman divergence induced by $\Psi$ \\
\hline
$\mathcal{C}$ & The constraint set \\
\hline
$d$ & A single sample \\
\hline
$D_{\sf pub}, D_{\sf priv}$ & The public and private data sets respectively \\
\hline
$\mathcal{D}$ & The universe of samples \\
\hline
$\epsilon, \delta$ & The privacy parameters \\
\hline
$\eta$ & The learning rate/step size \\
\hline
$\bfg$ & A (batch) gradient \\
\hline
$G_Q$ & The Gaussian width of $Q$ \\
\hline
$\ell_{\sf pub}, \ell_{\sf priv}$ & The (per-example) public and private loss functions respectively \\
\hline
$L$ & The Lipschitz constant of $\ell_{\sf priv}$ \\
\hline
$\mathcal{L}$ & A loss on dataset $D$, or the population loss\\
\hline
$\lambda$ & An eigenvalue\\
\hline
$n_{\sf pub}, n_{\sf priv}$ & The number of public and private samples respectively \\
\hline
$p$ & The dimensionality of the optimization problem\\ 
\hline
$\Psi$ &  The empirical public loss, i.e. $\Psi(\theta) = \frac{1}{|D_{\sf pub}|} \sum_{d \in D_{\sf pub}} \ell(\theta; d)$; also the mirror map\\ 
\hline
$T$ & The number of iterations in the optimization algorithm\\ 
\hline
$\tau$ & The distribution from which the training samples are drawn\\ 
\hline
$\theta$ & A solution to the optimization problem\\ 
\hline
\end{tabular}
\end{table}
\section{Additional Details on Real-world Experiments}
\label{app:exp real central}

\subsection{First-order Approximation to Mirror Descent}
\label{app:first order approximation}

In practice, the Mirror Descent (MD) step in Line~\ref{line:mdupdate} of Algorithm~\ref{alg:dpmd} is computationally the most challenging. In the following, we provide an approximation of this step in the setting where i) the problem is \emph{unconstrained}, i.e., $\calC=\mathbb{R}^p$ and ii). the public loss $\Psi(\theta)$ may not be strongly convex with respect to the $\ell_2$-norm. This approximation makes Algorithm~\ref{alg:dpmd} efficient in practice. 

Consider the following equivalence of Line~\ref{line:mdupdate} of Algorithm~\ref{alg:dpmd}, with $\Psi(\theta)$, the public loss on $\dpub$, replaced by $\hPsi(\theta)=\Psi(\theta)+\frac{1}{2}\ltwo{\theta}^2$.  This follows from Lemma 5.5 of~\cite{hazan2019introduction}.
\begin{align}
\theta_{t+1}&\leftarrow \argmin\limits_{\theta\in\mathbb{R}^p} \ip*{\sum\limits_{i=1}^t\eta_i \left(\bfg_i+\bfb_i\right)}{\theta}+\hPsi(\theta)\label{eq:mdlazy}\\
&=\argmin\limits_{\theta\in\mathbb{R}^p} \sum\limits_{i=1}^t\eta_i\left(\ip*{\bfg_i+\bfb_i}{\theta}+\frac{1}{t \eta_i}\Psi(\theta)\right)+\frac{1}{2}\ltwo{\theta}^2\nonumber\\
&\approx \argmin\limits_{\theta\in\mathbb{R}^p} \sum\limits_{i=1}^t\eta_i \ip*{\bfg_i+\bfb_i+\frac{1}{t \eta_i}\nabla\Psi(\theta_i)}{\theta}+\frac{1}{2}\ltwo{\theta}^2\label{eq:abd},
\end{align}

where \eqref{eq:abd} follows from the first-order approximation $\Psi(\theta)\approx\Psi(\theta_i)+\ip{\nabla \Psi(\theta_i)}{\theta-\theta_i}$. 

In the experiments, we replace $\bfg_i+\bfb_i+\frac{1}{t\cdot\eta_i}\nabla\Psi(\theta_i)$ with $\alpha_i (\bfg_i+\bfb_i)+(1-\alpha_i)\nabla\Psi(\theta_i)$, where $\alpha_i\in(0,1]$. This reparamertization helps with more effective hyperparameter tuning while training deep learning models. We therefore have the update rule~\eqref{eq:simpmd}.
\subsection{Setup}

\mypar{Network architectures}
\label{app:exp archs}
Table~\ref{table:exp archs} shows model architectures for CIFAR-10,
EMNIST, WikiText-2, \& StackOverflow.
 
\begin{table*}[!htbp]
\caption{Model architectures for real data experiments.}
\label{table:exp archs}
\begin{subtable}[t]{\textwidth}
\caption{Model architecture for CIFAR-10.}
\label{table:cifar10_nn}
\centering
\begin{tabular}{ c c }
\toprule
Layer & Parameters \\
\midrule
Convolution  $\times 2$ & 32 filters of $3 \times 3$, strides 1\\
Max-Pooling & $2 \times 2$, stride 2 \\
Convolution  $\times 2$ & 64 filters of $3 \times 3$, strides 1 \\
Max-Pooling & $2 \times 2$, stride 2 \\
Convolution  $\times 2$ & 128 filters of $3 \times 3$, strides 1 \\
Max-Pooling & $2 \times 2$, stride 2 \\
Fully connected & $128$ units \\
Softmax & - \\
\bottomrule
\end{tabular}
\end{subtable}

\begin{subtable}[t]{0.45\textwidth}
\caption{Model architecture for EMNIST.}
\label{table:mnist_nn}
\centering
\begin{tabular}{ c c }
\toprule
Layer & Parameters \\
\midrule
Convolution & 16 filters of $8 \times 8$, strides 2 \\
Convolution & 32 filters of $4 \times 4$, strides 2 \\
Fully connected & $32$ units \\
Softmax & - \\
\bottomrule
\end{tabular}
\end{subtable}
~
\centering
\begin{subtable}[t]{0.45\textwidth}
\caption{Model architecture for WikiText-2.}
\label{table:stackoverflow_nn}
\centering
\begin{tabular}{ c c }
\toprule
Layer & Parameters \\
\midrule
Input & 8000 \\
Fully connected & 120 \\
LSTM $\times 2$ & 120 hidden units \\
Fully connected & 8000 \\
Softmax & -\\
\bottomrule
\end{tabular}
\end{subtable}
-
\centering
\begin{subtable}[t]{\textwidth}
\caption{Model architecture for StackOverflow.}
\label{table:stackoverflow_nn}
\centering
\begin{tabular}{ c c }
\toprule
Layer & Parameters \\
\midrule
Input & 0 \\
embedding & 960384 \\
LSTM & 2055560 \\
Fully connected & 64416 \\
Fully connected & 970388 \\
Softmax & -\\
\bottomrule
\end{tabular}
\end{subtable}
\end{table*}

\mypar{Hyperparameter Tuning}
We keep the clipping norm to be 1. 
One small difference with the standard DP-SGD update rule is that we enforce an additional clipping step for the privatized gradient for the image classification task, where the clipping norm is the same as the clipping norm of for individual gradient. The reason for this additional step is that the norm of the averaged clipping gradients should still be upper bounded by the clipping norm.

The only hyperparameters that need to be tuned are the learning rate and $\alphaT$ that controls the decaying of $\alpha_t$. For the learning rate, we consider a grid of $\{1, 2, 5\} \times 10^{i}$ for different $i$s such that the optimal learning rate does not appear on the boundary. We search for the optimal $\alphaT$ in $\{100, 200, 500\}$ for WikiText-2 and $\{200, 500, 1000, 2000, 5000\}$ for the image classification tasks.

\subsection{Full Plots for Section~\ref{sec:empEval}}
\label{app:full plot}

In Figure~\ref{fig:wiki 4 full}, we plot the complete version of Figure~\ref{fig:wiki 4},
and in Figure~\ref{fig:cifar10 4 full}, we show the complete version of Figure~\ref{fig:cifar10 4}.

\begin{figure}[htb!]
\centering
\begin{subfigure}[b]{0.49\textwidth}
\centering
\includegraphics[width=\textwidth]{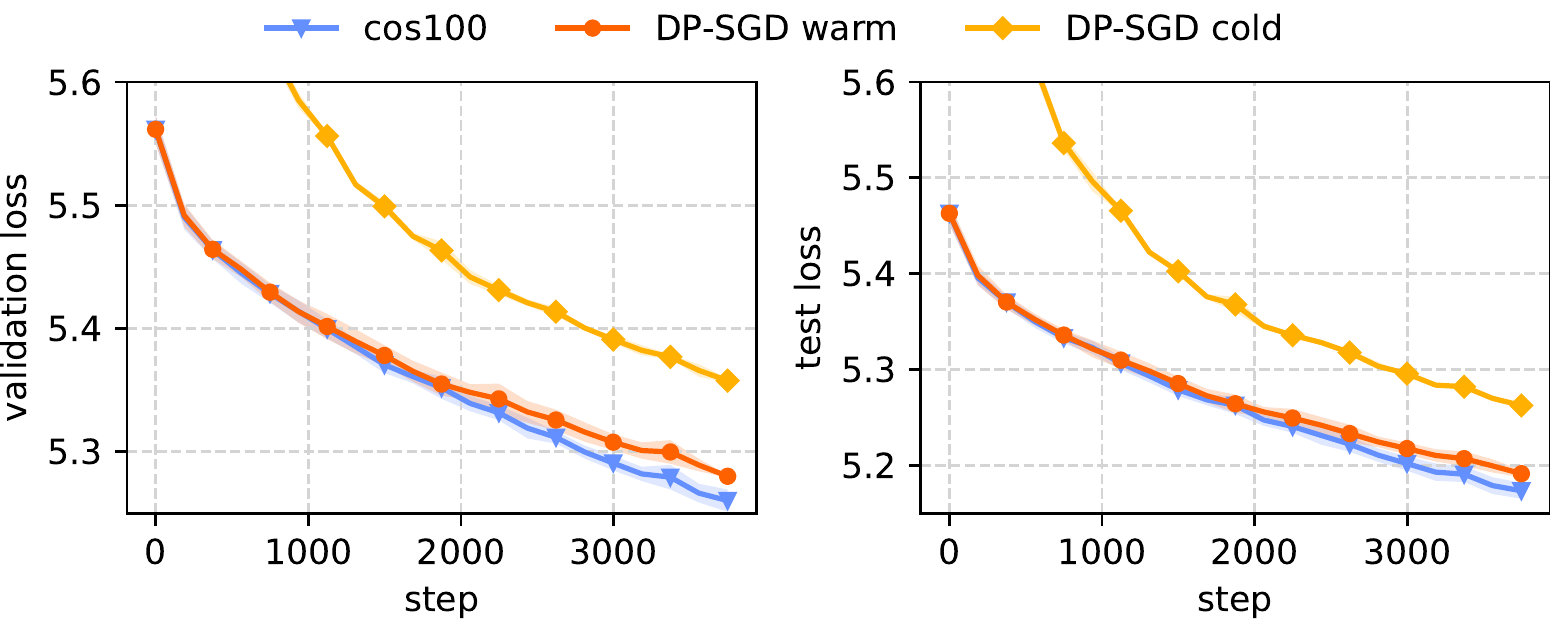}
\caption{$\sigma=0.5$}
\end{subfigure}
\hfill
\begin{subfigure}[b]{0.49\textwidth}
\centering
\includegraphics[width=\textwidth]{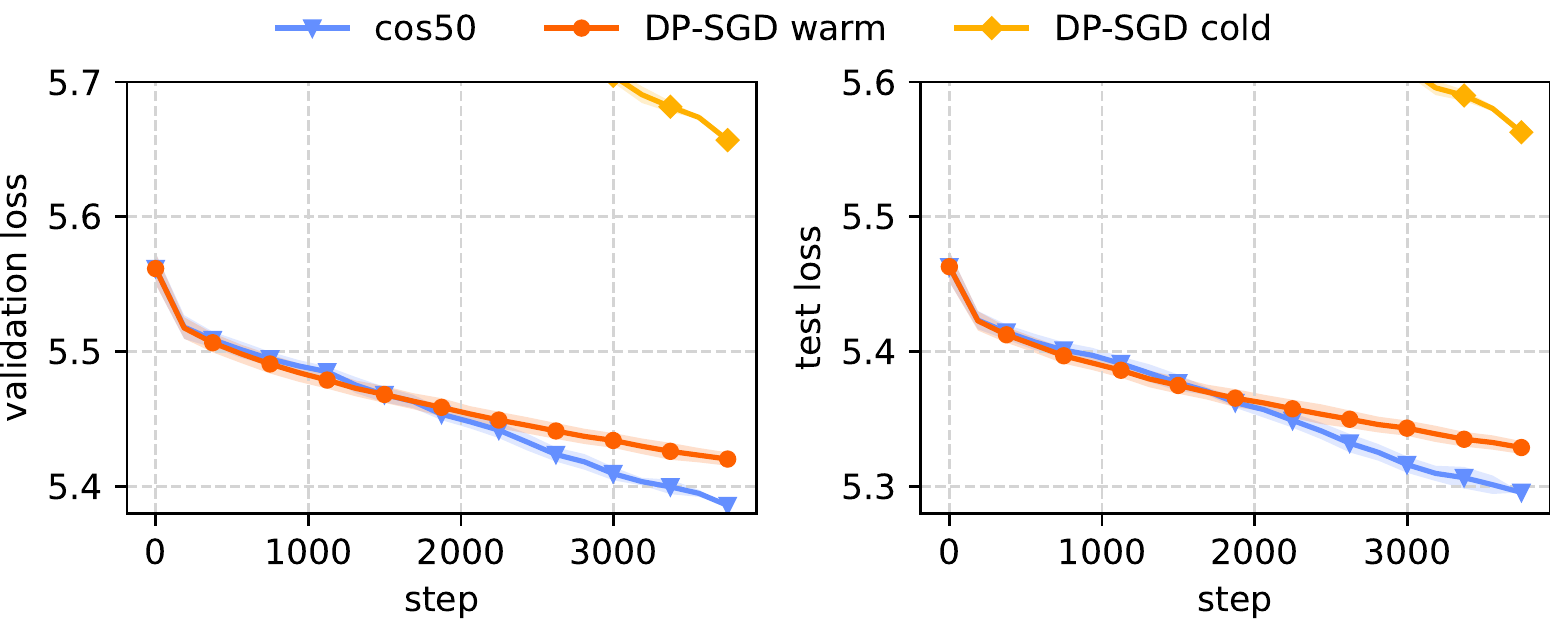}
\caption{$\sigma=1.08$}
\end{subfigure}
\caption{Full plot for Figure~\ref{fig:wiki 4}.
WikiText-2. 4\% public data.}
\label{fig:wiki 4 full}
\end{figure}

\begin{figure}[h!]
\centering
\begin{subfigure}[b]{0.49\textwidth}
\centering
\includegraphics[width=\textwidth]{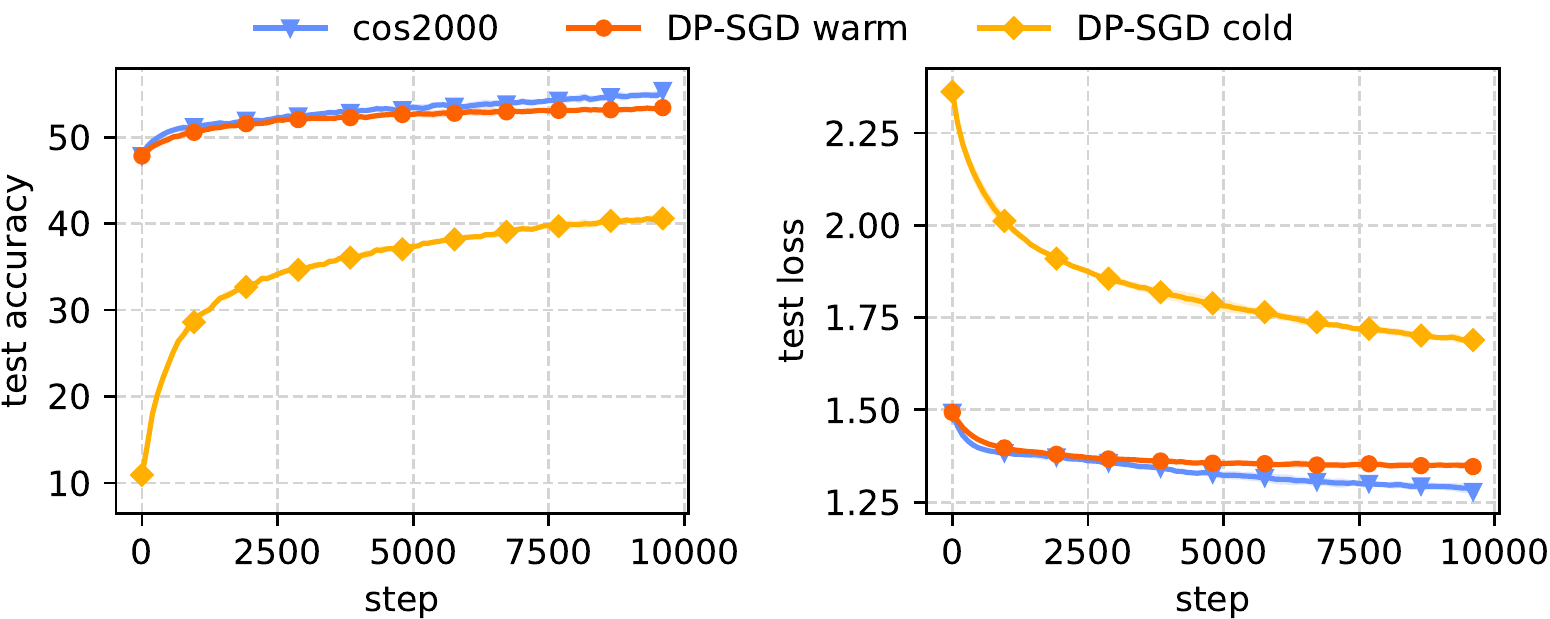}
\caption{$\sigma=20.0$.}
\end{subfigure}
\caption{
Full plot of Figure~\ref{fig:cifar10 4}.
CIFAR-10. 4\% public data. Test accuracy / loss vs. training steps. Averaged over $3$ runs.
}
\label{fig:cifar10 4 full}
\end{figure}

\subsection{WikiText-2 with WikiText-103 as Public Data}
\label{app:wiki2}
We compare with the SoTA~\citep{asi2021private} which uses WikiText-103 as public data. Specifically, we consider their ``LargeAux'' setting under $\epsilon=1.0$ and $3.0$.
Since the implementation for \citet{asi2021private} is not public as of writing this work, we make our best effort to match their experiment setup. We note that the data preprocessing and the number of iterations used (thus the noise multiplier for achieving the same $\epsilon$) might differ.

We preprocess WikiText-103 as follows.
After processing WikiText-2 as described in Section~\ref{sec:empEval}, we convert all words that does not appear in the processed WikiText-2 as the unknown token. Then, we split the sentences into length-35 sequences, and remove all sequences that overlap with WikiText-2.
Finally, we randomly sample 48,764 sequences, in order to match the ``LargeAux'' setting where the public dataset is of the same size as the private training dataset.

Figure~\ref{fig:wiki wiki103 new} shows the results.
In our setting, cold start DP-SGD reaches similar log perplexity as those in~\citep{asi2021private}, while the warm-start DP-SGD is already better than \cite{asi2021private} (LargeAux).
The final test log perplexities are summarized below, with the results in \citep{asi2021private} converted from perplexity to log perplexity. 

\begin{table}[h]
\centering
\begin{tabular}{ l c c }
\toprule
Algorithm       & $\epsilon=3.0$ & $\epsilon=1.0$ \\
\midrule
\cite{asi2021private} DP-SGD (cold)    & 5.4819 & 5.6623 \\
\cite{asi2021private} (LargeAux)       & 5.4324 & 5.5254 \\
\midrule
Our DP-SGD (cold)    & 5.4030 & 5.5956 \\
Our DP-SGD (warm)    & 5.3646 & 5.5141 \\
\bottomrule
\end{tabular}
\end{table}

\begin{figure}[h!]
\centering
\begin{subfigure}[b]{0.49\textwidth}
\centering
\includegraphics[width=\textwidth]{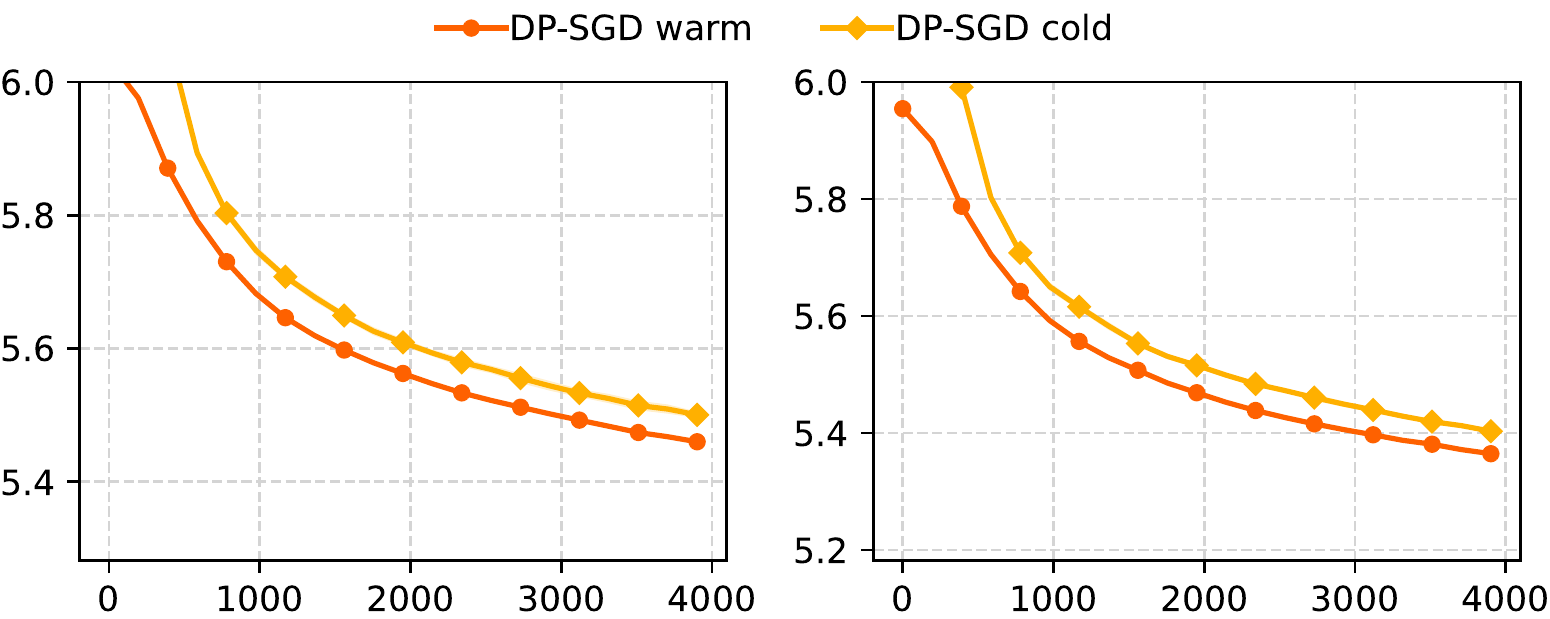}
\caption{$\sigma=0.83$, $\epsilon=3.0$.}
\end{subfigure}
\hfill
\begin{subfigure}[b]{0.49\textwidth}
\centering
\includegraphics[width=\textwidth]{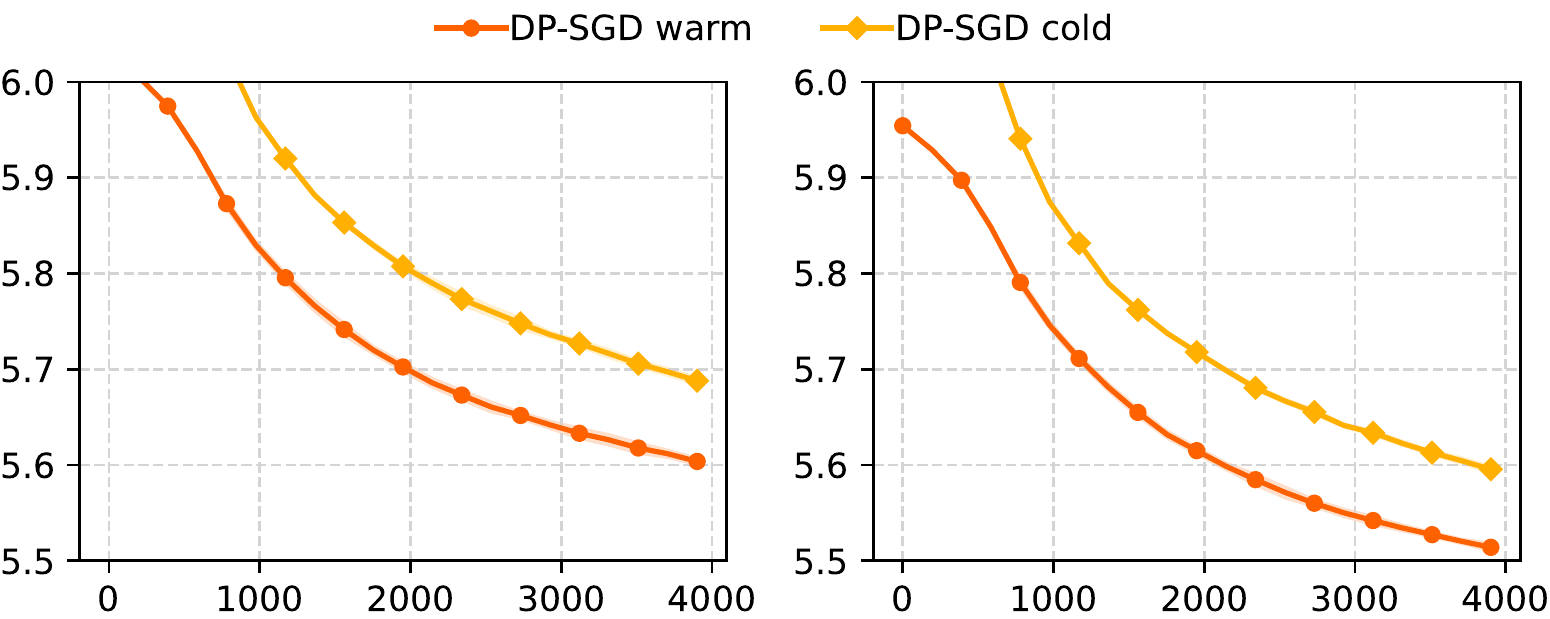}
\caption{$\sigma=1.49$, $\epsilon=1.0$.}
\end{subfigure}
\caption{WikiText-2. WikiText-103 as public data.}
\label{fig:wiki wiki103 new}
\end{figure}
\end{document}